\documentclass[preprint]{elsarticle}
\usepackage{hyperref}
\usepackage[centertags]{amsmath}
\usepackage{amsfonts}
\usepackage{amssymb}
\usepackage{adjustbox}
\usepackage{amsmath, bm}
\usepackage{changes}
\newtheorem{theorem}{Theorem}
\newtheorem{proof}{Proof}

\usepackage[ruled,vlined]{algorithm2e}
\usepackage{enumitem}
\usepackage{multicol}
\usepackage{multirow}
\usepackage{makecell}
\usepackage{ragged2e}
\usepackage{color}

\usepackage{titlesec}
\titleclass{\subsubsubsection}{straight}[\subsubsection]
\newcounter{subsubsubsection}[subsubsection]
\renewcommand{\thesubsubsubsection}
  {\thesubsubsection.\arabic{subsubsubsection}}
\titleformat{\subsubsubsection}[hang]
  {\normalfont\normalsize\itshape\mdseries}
  {\thesubsubsubsection.}
  {1em}
  {}
\titlespacing{\subsubsubsection}
  {0pt}
  {12pt plus 6pt minus 3pt}
  {1pt}

\newtheorem{remark}{Remark}
\journal{Journal}
\numberwithin{table}{section}
\usepackage[ruled]{algorithm2e}
\SetKwBlock{GrO}{Grouping optimization:}{end}
\SetKwBlock{PeO}{Perform optimization steps:}{end}
\DeclareMathOperator{\relu}{ReLU}
\numberwithin{equation}{section}
\numberwithin{figure}{section}
\begin{document}
\begin{frontmatter}
\title{RepNN: Tackling spectral bias in deep neural networks via parameter reparameterization}
\author[hust]{Yong Wang} 
\author[cas]{Tao Zhou}
\author[hust]{Xuhui Meng\corref{cor1}}
\ead{xuhui\_meng@hust.edu.cn}

\cortext[cor1]{Corresponding author}
\address[hust]{Institute of Interdisciplinary Research for Mathematics and Applied Science, School of Mathematics and Statistics, Huazhong University of Science and Technology, Wuhan 430074, China}
\address[cas]{Institute of Computational Mathematics, Academy of Mathematics and Systems Science, Chinese
Academy of Sciences, Beijing 100190, China}

\begin{abstract}
Deep neural networks (DNNs) have achieved remarkable success in scientific computing, yet they often suffer from spectral bias in capturing oscillatory and multiscale behaviors. In this study, we investigate this limitation by examining the failure of shallow ReLU neural networks in fitting high-frequency functions.
This observation identifies two important factors in resolving rapid oscillations: the initial slope scale and the distribution of partition points induced by the networks.
Motivated by this analysis, we propose RepNN, a reparameterized neural network model with activation ReLU or tanh designed for high-frequency and multiscale problems. The key idea is to reparameterize the weights and biases in the first hidden layer, which enables effective control of the initial slope scale and provides an appropriate distribution of the initial partition points. Furthermore, treating the reparameterized weights and biases as trainable parameters allows the DNN to achieve adaptive frequency scaling during training. In addition, we derive quantitative estimates for the output and slope magnitudes of the reparameterized DNN to guide the initialization of the proposed method.
Numerical experiments, including multiscale one- and four-dimensional function approximations, forward and inverse PDE problems in combination with physics-informed neural networks (PINNs), and operator learning for an earthquake problem using real data, demonstrate that RepNN improves the predicted accuracy of vanilla DNNs in capturing highly oscillatory features with slightly additional computational cost. These results indicate that RepNN provides an effective and flexible approach for overcoming spectral bias and applying DNNs to multiscale problems.

\end{abstract}
\begin{keyword}
Deep learning
\sep Spectral bias
\sep Multiscale problems
\sep PDE problems
\sep Reparameterized DNNs
\end{keyword}
\end{frontmatter}

\clearpage

\section{Introduction}
Deep neural networks (DNNs) have achieved remarkable success in solving problems involving partial differential equations (PDEs), particularly high-dimensional PDEs \cite{yu2018deep,HU2024106369}, and have enabled the development of fast PDE solvers through neural operators \cite{li2021ftric,YOU2026114530,lu2021learning,AMIN2026118645}. These deep learning–based approaches have been successfully applied to a wide range of real-world applications, including seismic imaging \cite{101093gjiggac399,GENG2025105967}, combustion modeling \cite{zanardi2023adaptive,ZHANG2024113647}, and fluid dynamics \cite{ZHENG2025110329,Zhang_Shukla_Wang_s}. Despite these advances, a fundamental and well-recognized limitation of vanilla DNNs is spectral bias \cite{pmlrv97rahaman19a,XU2025113905}, also known as the frequency principle \cite{luo2019thinciplegeneral,xu2025overview}. During training, DNNs tend to learn low-frequency components of a target function significantly faster and more accurately than high-frequency components. This inherent preference for low-frequency information limits their ability to represent oscillatory and multiscale features, posing a significant challenge for PDE problems characterized by complex multiscale phenomena \cite{khodakarami2026spectral}.

Spectral bias has been extensively investigated from various perspectives in recent years. Khodakarami {\sl et al.}~\cite{khodakarami2026spectral} demonstrated that this bias is attributed to various reasons, including the representation capacity of neural networks, optimization, as well as loss formulations. In this study, we focus specifically on mitigating this limitation by architecture design to enhance the network expressivity. Along this direction, numerous strategies have been proposed.
In one of the most recent review papers on spectral bias \cite{XU2025113905}, Xu {\sl et al.} categorize these approaches into frequency shifting \cite{doi10113719M1310050,10555536258343625935,hu2024neutron}, frequency scaling \cite{tancik2020fourier,CiCP281970,WANG2021113938,101093gjiggac399,WANG2024113112,HUANG2025117751,feng202ention}, and hybrid methods \cite{CHEN2022110996,zhang2024blending,ALDIRANY2024116666}.
Frequency shifting methods aim to transform challenging high-frequency learning tasks into more tractable low-frequency ones, thereby accelerating the training of DNNs on oscillatory components. A representative example is the phase shift DNN (PhaseDNN) \cite{doi10113719M1310050}, which has demonstrated strong performance for high-frequency wave equations in inhomogeneous media. However, PhaseDNN often struggles with highly complex multiscale problems and suffers from the curse of dimensionality \cite{XU2025113905}.
Another important class of methods is frequency scaling, which maps high-frequency components to lower-frequency regimes before training, enabling DNNs to better capture multiscale features. Representative architectures include the multiscale DNN (MscaleDNN) \cite{CiCP281970,LI2023112242,YOU2026114530} and multiscale Fourier feature networks (MFF) \cite{WANG2021113938,LI2023114963,wang2safsded}. In the MscaleDNN developed in ~\cite{CiCP281970}, Liu {\sl et al.} employed the radial down-scaling mappings in the frequency domain, which has been successfully applied to solving the Boltzmann equation \cite{doi10113723M1558227} and three-dimensional turbulent wind field reconstruction \cite{CHEN2025124577}. The MFF ~\cite{WANG2021113938} is developed from the perspective of the neural tangent kernel, with the core idea to add the random Fourier feature embeddings for the input of DNNs and has been applied to ice melting simulations \cite{chsicsinformed} and seismic denoising \cite{xie2025simultaneous}. Other notable architectures include SIREN \cite{sitzmann2020implicit}, DNNs with HAT activations \cite{hong2022activation}, high-frequency-enhanced DNNs \cite{HUANG2025113676}, cross-attention-based DNNs \cite{feng202ention}, and data-integrated DNNs \cite{zhelving}. Despite these advances, most of the aforementioned approaches rely on pre-defined multiscale transformations or frequency bases \cite{WANG2021113938,huasorneuralnetworks,feng202ention}, which can restrict their flexibility and robustness when dealing with problems involving unknown, spatially varying, or highly heterogeneous frequency characteristics \cite{wangves,HUANG2025117751}.

More recently, several adaptive approaches \cite{10106350286561,huasorneuralnetworks,tang2based} have also been proposed to reduce the reliance on predefined hyperparameters. For example, Huang {\sl et al.} developed frequency-adaptive MscaleDNNs \cite{HUANG2025117751} and frequency-adaptive tensor neural networks \cite{huasorneuralnetworks} to improve the robustness of the models developed in ~\cite{CiCP281970,WANG2021113938}. However, Huang {\sl et al.} noted that these adaptive approaches incur substantially higher computational costs \cite{HUANG2025117751}. In ~\cite{wangves}, Wang {\sl et al.} proposed a random weight factorization strategy, in which an additional trainable scaling parameter is assigned to each neuron in a vanilla DNN to facilitate the learning of high-frequency components. Nevertheless, the performance of this approach depends on the careful initialization of the associated hyperparameters. A similar limitation also exists in the DNN framework with trainable sinusoidal activation functions proposed in \cite{KHADEMI2025109672}. In \cite{LIU2025106886}, Liu {\sl et al.} employed the training loss to guide spatially adaptive Fourier feature encoding. However, its effectiveness may be compromised when the training loss does not accurately reflect the frequency-wise approximation quality. More recently, Hou {\sl et al.}~\cite{HOU2026108247} proposed a framework that dynamically generates Fourier features from layer-wise residual representations. While effective, this approach can be computationally expensive, particularly for deep neural network architectures.

In this work, we attempt to address the aforementioned issues by developing a principled yet lightweight approach for overcoming the spectral bias. Motivated by the partition of the input space induced by ReLU as well as tanh DNNs~\cite{he2020reluReLU}, we propose a reparameterized deep neural network model, which is referred to as RepNN, for high-frequency and multiscale problems. The key idea is to reparameterize the weights and biases in the first hidden layer, enabling effective control of the slope scale and an appropriate initial decomposition of the computational domain. Our contributions are summarized as follows:
\begin{itemize}

\item We propose a reparameterized neural network (RepNN) model that effectively controls the initial slope scale and provides a well-distributed initialization for the partition points, thereby enhancing the capability of DNNs to capture highly oscillatory behavior. Further, by treating the reparameterized weights and biases as trainable parameters, RepNN provides an adaptive frequency scaling mechanism, thereby reducing its reliance on predefined multiscale features.

\item We establish quantitative estimates of the output and slope magnitudes of the proposed RepNN, which offer guidance for designing suitable parameter initialization. We also provide a theoretical analysis of gradient dynamics at initialization, showing that tanh saturation constrains gradient magnitude and prevents explosion while sparse active pathways ensure the stability of the training.

\item We demonstrate the effectiveness and robustness of RepNN through extensive numerical experiments on function approximation, forward PDE problems, inverse parameter identification, and operator learning.
\end{itemize}

The rest of this paper is organized as follows: In Section \ref{Section: section Methodology}, we present the RepNN together with quantitative estimates of the output and slope magnitudes to guide the parameter initialization. In Section \ref{section: Numerical examples}, we first conduct ablation studies on the key hyperparameters and then evaluate RepNN on a suite of benchmarks including function approximation, forward and inverse PDE problems, and operator learning. In the last section, we give a summary of this work.

\section{Methodology}\label{Section: section Methodology}
In this section, we first briefly review the vanilla deep neural networks using function approximation as an example, and then we introduce the proposed reparameterization as well as initialization of the parameters in RepNN to mitigate the spectral bias.

\subsection{Vanilla deep neural networks}\label{section: VDNN}
Let us consider a vanilla fully-connected neural network (FNN) with $K$ layers, consisting of $K-1$ hidden layers and one output layer, preceded by an input layer. The output of each layer can be formulated as follows:
\begin{equation}\label{DNNs}
\begin{cases}
Z_1 = \sigma(W_1\bm{x} +b_1),&\\
Z_l = \sigma(W_lZ_{l-1}+b_l),& l=2, \cdots, K-1,\\
u_{\bm{\theta}}(\bm{x}) = W_KZ_{K-1}+b_K,\\
\end{cases}
\end{equation}
where $\bm{\theta} =\{\bm{W}_{l}=(W_{ij}^{(l)}) \in \mathbb{R}^{m_{l} \times m_{l-1}}, \bm{b}_{l} = (b_{i}^{(l)}) \in \mathbb{R}^{m_{l}}\}_ {l=1}^K$ denotes all the parameters in the DNNs, including the weight matrices and bias vectors, $m_{l}$ represents the number of neurons in the $l$-th layer. Further, $\bm{x} \in \mathbb{R}^{m_0}$ is the input vector which denotes the spatial and/or temporal coordinates, and $u_{\bm{\theta}}(\bm{x})$ is the output of the DNN with respect to input $\bm{x}$ parameterized by $\bm{\theta}$, and $\sigma$ denotes the element-wise activation function, two widely used examples are ReLU,
\begin{equation}
\relu(x) = \max\{0,x\}, \quad x \in \mathbb{R},  
\end{equation}
and the hyperbolic tangent activation function,
\begin{equation}
\tanh(x) = \frac{e^x - e^{-x}}{e^x + e^{-x}}, \quad x \in \mathbb{R}.   
\end{equation}
For the stability of training, when $\sigma = \relu$, the parameter set $\bm{\theta}$ is generally initialized using He initialization \cite{he2015delving}:
\begin{equation}\label{DNN:he_initialition}
 W_{ij}^{(l)} \sim \mathcal{N}(0, \frac{2}{m_{l-1}}), \quad b_{i}^{(l)}=0,
\end{equation}
with $\mathcal{N}$ denoting the Gaussian distribution. 
When $\sigma = \tanh$, the set $\bm{\theta}$ is initialized using Xavier initialization \cite{glorot}:
\begin{equation}\label{DNN:Xavier_initialition}
 W_{ij}^{(l)} \sim \mathcal{N}(0, \frac{2}{m_{l-1}+m_l}), \quad b_{i}^{(l)}=0.
\end{equation}
In this study, the above initialization strategies are adopted for the vanilla DNNs defined in Eq.~\eqref{DNNs}.

\begin{remark}
To reduce the influence of different coordinate scales, the input vector $\bm{x}$ denotes the normalized coordinate throughout this work. Given an original coordinate vector $\bm{x}^{\rm ori}$, we define
\begin{equation}
\bm{x}
=
2\frac{\bm{x}^{\rm ori}-\bm{x}_{\min}}
{\bm{x}_{\max}-\bm{x}_{\min}}
-\mathbf{1},
\end{equation}
where $\bm{x}_{\min}, \bm{x}_{\max} \in \mathbb{R}^{m_0}$ are the component-wise lower and upper bounds of the computational domain, respectively. Thus, the original coordinates are normalized to the range \([-1,1]^{m_0}\) before being fed into the neural network.
\end{remark}

The training of a DNN is typically formulated as an optimization problem, whose objective is to identify a suitable parameter set $\bm{\theta}$. For instance, consider a regression problem with a training dataset $\mathcal{D} = \{(\bm{x}_i, {u}_i)\}_{i=1}^N$. The goal is to obtain a DNN $u_{\bm{\theta}}(\cdot)$ that best approximates the underlying mapping from inputs $\bm{x}_i$ to targets ${u}_i$. This is achieved by minimizing the mean squared error (MSE) loss function with respect to the parameter set $\bm{\theta}$:
\begin{equation}\label{eq:dnn-mse-loss}
\mathcal{L}(\bm{\theta}; \mathcal{D}) = \frac{1}{N} \sum_{i=1}^N | u_{\bm{\theta}}(\bm{x}_i) - {u}_i |^2.
\end{equation}
The loss Eq.~\eqref{eq:dnn-mse-loss} is then minimized via gradient-based optimization algorithms, such as Adam \cite{kingma2} and L-BFGS \cite{LLLBFGS}.
Suppose
\begin{equation}\label{eq:theta-opt}
{\bm{\theta}^*} = \arg\min_{\bm{\theta}} \mathcal{L}(\bm{\theta}; \mathcal{D}),
\end{equation}
and the resulting $u_{\bm{\theta}^*}(\cdot)$ serves as the learned approximation to the target function given the training dataset.

\subsection{RepNN: Reparameterized neural networks}\label{Section: Methodology}
In this section, we first discuss the limitations of ReLU neural networks in approximating high-frequency functions using an example of a one-dimensional case, as described in Section \ref{Methodology: ReLU neural networks}. Motivated by this analysis, we then propose a reparameterized deep neural network with both ReLU and hyperbolic tangent activations ($\mathrm{tanh}$) for high-frequency and multiscale problems in Section \ref{section: reparameterization of DNN framework}.

\subsubsection{ReLU neural networks for regression with high-frequency function}\label{Methodology: ReLU neural networks}

We consider a single-hidden-layer fully connected ReLU neural network (ReLU NN) with $m$ hidden neurons, denoted by $u_{\bm{\theta}}(x): \mathbb{R} \to \mathbb{R}$. The detailed forward pass of the network is given by
\begin{equation}\label{eq:ssingle-hidden-layer relu}
\begin{cases}
Z_1 = \relu (W_1 x + b_1), \\
u_{\bm{\theta}}(x) = W_2 Z_1 + b_2,
\end{cases}
\end{equation}
where $W_1, W_2, b_1, b_2$, defined consistently with vanilla DNN Eq.~\eqref{DNNs}, are the parameters of the network to be determined. Following the analysis in ~\cite{he2020reluReLU}, the neurons in the hidden layer correspond to a set of partition points $\bm{x}_p \in \mathbb{R}^m$, defined by
\begin{equation}\label{partition_hyperplane}
W_1 \odot \bm{x}_p + b_1 = 0,
\end{equation}
where $\odot$ denotes the Hadamard product.
Since $\relu$ is a continuous piecewise linear function, each associated partition point divides the input space $\mathbb{R}$ into activated and non-activated regions. Furthermore, for a ReLU NN, there exists a corresponding interval decomposition of input space $\mathbb{R}$ such that the network provides a linear approximation on each subinterval in such a decomposition. Note that, through the lens of the finite element method (FEM)~\cite{doi1011370718033,CHEN2025113939}, such an interval decomposition can be regarded as a ``grid'' that discretizes the computational domain. 

Next, we employ the ReLU NN Eq.~\eqref{eq:ssingle-hidden-layer relu} to approximate the following one-dimensional high-frequency function:
\begin{equation}\label{Methodology: high-frequency}
u(x) = \sin (15 \pi x), \quad x\in[-1,1].
\end{equation}
We set the number of hidden neurons to $m=400$, and initialize all network parameters using the He initialization in Eq.~\eqref{DNN:he_initialition}:
\begin{equation}
W_{ij}^{(1)} \sim \mathcal{N}(0, 2), \quad W_{ij}^{(2)} \sim \mathcal{N}(0,0.005), \quad b_{i}^{(1)} =b_{i}^{(2)}= 0.
\end{equation}
The training dataset $\{x_i, u(x_i)\}_{i=1}^{1280}$ is generated by uniformly sampling $x_i$ from the interval $[-1,1]$, and is used to evaluate the loss function in Eq.~\eqref{eq:dnn-mse-loss}. Other experimental settings, e.g., optimizer, etc., follow the descriptions in Section~\ref{section: Numerical examples}. Fig.~\ref{fig Methodology: fit high-frequency} (left) presents a detailed comparison between the exact solution and the prediction obtained by the ReLU network. It can be observed that the network accurately approximates the target function only in a narrow region around \(x=0\), while exhibiting poor accuracy over the remainder of the computational domain. This behavior is somewhat surprising because the ReLU network used in this example possesses approximately 400 partition points, which should in principle be sufficient to resolve the oscillatory target function.
To further investigate this phenomenon, we plot the histograms of the initial and optimized partition points in Fig.~\ref{fig Methodology: fit high-frequency} (right). It can be observed that most of the optimized partition points remain concentrated near \(x=0\).
Consequently, only a small portion of the computational domain is adequately partitioned, which explains why the ReLU network achieves good approximation accuracy only near \(x=0\).
These results reveal a significant gap between the established theoretical approximation capability of ReLU networks \cite{he2020reluReLU} and their practical performance when trained using gradient-based optimization. In particular, although the network has sufficient representational capacity, the learned distribution of partition points may be highly non-uniform, limiting its ability to capture highly oscillatory features across the entire domain.

\begin{figure}[!ht]
 \centering
 \includegraphics[width=1.0\textwidth]{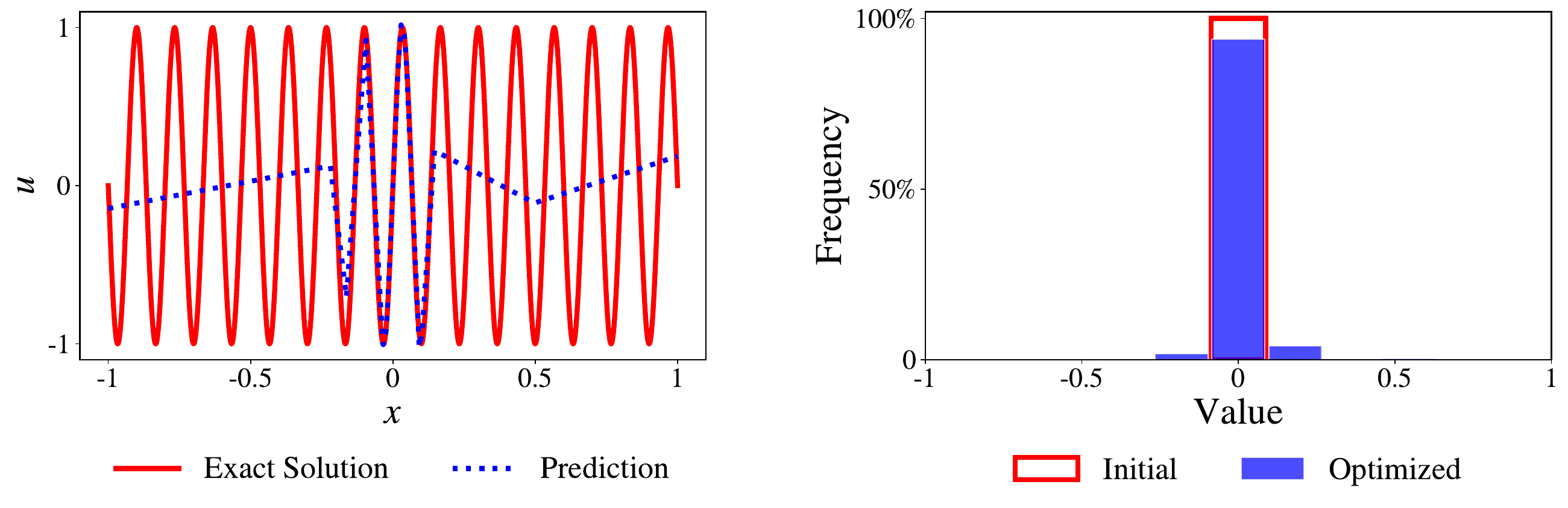} 
 \caption{1D high-frequency function approximation in Eq.~\eqref{Methodology: high-frequency}: The exact solution and predicted solution obtained by the ReLU NN Eq.~\eqref{eq:ssingle-hidden-layer relu} (left). (Normalized) Histograms of the initial and optimized partition points in the  ReLU NN Eq.~\eqref{eq:ssingle-hidden-layer relu} (right), where optimized partition points with absolute values greater than 1 are not shown.}
 \label{fig Methodology: fit high-frequency}
\end{figure}

Based on the pedagogical example, we summarize possible main reasons for the failure of the ReLU NN as follows:
\begin{itemize}
 \item {\bf{Mismatch between initial slope and that of target function}}.

 High-frequency target functions usually exhibit large local slopes. As discussed in \cite{he2020reluReLU}, a ReLU NN provides a linear approximation on each partitioned subdomain, where each activated hidden neuron contributes a local slope associated with $W_{ij}^{(1)}$. Hence, from an optimization perspective, small initialization of $W_{ij}^{(1)}$ leads to a small initial slope scale (e.g., under He initialization), which may slow down convergence, while larger initial $W_{1}$ can better match the target slope scale, thereby facilitating optimization.

 \item {\bf{Poor initialization of partition points}}.

 Even with appropriately scaled weights, a second issue arises from the multiplicative coupling between the weight and bias. From Eq.~\eqref{partition_hyperplane}, the partition point induced by the $i$-th neuron is $x_p^{(i)} = -b_i^{(1)}/W_i^{(1)}$. High-frequency functions typically require multiple spatially distributed partition points to resolve local variations across the input domain. Under the He initialization $b_i^{(1)} = 0$, however, all partition points collapse to the origin regardless of the weight magnitudes. To place the partition points within the computational domain $[-1,1]$, the bias must scale proportionally with the weight, i.e., $|b_i^{(1)}| \sim \mathcal{O}(|W_i^{(1)}|)$. This coupling constraint must hold throughout training: as $W_i^{(1)}$ evolves to capture steep local slopes, $b_i^{(1)}$ must track it proportionally, or the partition points drift toward the origin, impeding the network's ability to resolve high-frequency features.
    
\end{itemize}


\subsubsection{Reparameterized \texorpdfstring{$\operatorname{ReLU}$}{ReLU} and \texorpdfstring{$\tanh$}{tanh} neural networks}\label{section: reparameterization of DNN framework}

Inspired by the possible failure mechanisms of the ReLU NNs discussed in Section~\ref{Methodology: ReLU neural networks}, we develop a reparameterized neural network (RepNN) for high-frequency and multiscale problems. Specifically,  {we first present the reparameterized ReLU NN and its deep extension in Section \ref{Subsection: reRELU-DNN}, then extend it to tanh activation in Section \ref{Section: redeepNN} and finally propose a reparameterized tensor DNN to address high-dimensional problems in Section \ref{Section:RetensorNN}.

\subsubsubsection{Reparameterized ReLU neural networks}\label{Subsection: reRELU-DNN}

Building on the analysis in Section~\ref{Methodology: ReLU neural networks}, we propose a reparameterized strategy that decouples the weight and bias in the first hidden layer, enabling independent control of the local slope scale and the partition points. Specifically, we reparameterize the weight and bias pair $(W_1,b_1)$ by introducing a slope control vector $W_1$ and a shifted bias vector $\tilde b_1$. The resulting feed-forward propagation reads:
\begin{equation}\label{eq:shallow-dnn-reparameterization}
\begin{cases}
Z_1=\relu \!\left(W_{1} \odot (x\mathbf{1}_m+\tilde b_1)\right), \\
u_{\bm{\theta}}(x) = W_2 Z_1 + b_2,
\end{cases}
\end{equation}
where \(\mathbf{1}_m\), \( W_1 =(W_i^{(1)}) \), \( \tilde{b}_1 = (\tilde{b}_i^{(1)}) \in \mathbb{R}^m \). This construction corresponds to the following reparameterization of the vanilla ReLU NN Eq.~\eqref{eq:ssingle-hidden-layer relu}:
\begin{equation}
(W_{1},b_{1}) = (W_{1},W_{1} \odot \tilde b_1).
\end{equation}

The reparameterization addresses the two failure mechanisms identified in Section~\ref{Methodology: ReLU neural networks} through a unified design. First, $W_1$ now appears both as the weight controlling local slopes and as a multiplicative factor in the bias term $b_1 = W_1 \odot \tilde b_1$. This coupling resolves the partition-point degeneracy: since $\bm{x}_p = -\tilde b_1$, the partition points are determined solely by $\tilde b_1$, independently of the slope scale encoded in $W_1$. The two quantities are thus decoupled - $W_1$ sets the local slope magnitudes, while $\tilde b_1$ sets the partition geometry. Second, the reparameterized weights and biases remain trainable. During training, $W_1$ and $\tilde b_1$ adapt simultaneously: $W_1$ learns the appropriate slope scale for each neuron, while the bias $b_1 = W_1 \odot \tilde b_1$ tracks $W_1$ proportionally, to ensure that the partition points stay within the computational domain. From this perspective, the proposed RepNN functions as a frequency-adaptive method, learning the frequency-related components from data rather than relying on pre-defined multiscale features.

For initialization, (1) we sample $\tilde b_i^{(1)} \sim \mathcal{U}(-1,1)$ ($\mathcal{U}$ denotes the uniform distribution) to distribute the initial partition points approximately uniformly over the computational domain, avoiding the degeneracy of zero-bias initialization, and (2) the weight $W_i^{(1)}$ is drawn from a Gaussian distribution with the variance $\nu_s^2$:
\begin{equation}
 W_{i}^{(1)} \sim \mathcal{N}(0,\nu_s^2).
\end{equation}
Intuitively, choosing a large $\nu_s$ enlarges the range of initial slope magnitudes, enabling the network to capture steep local variations from the start of training. 


\begin{figure}[!ht]
 \centering
 \includegraphics[width=1.0\textwidth]{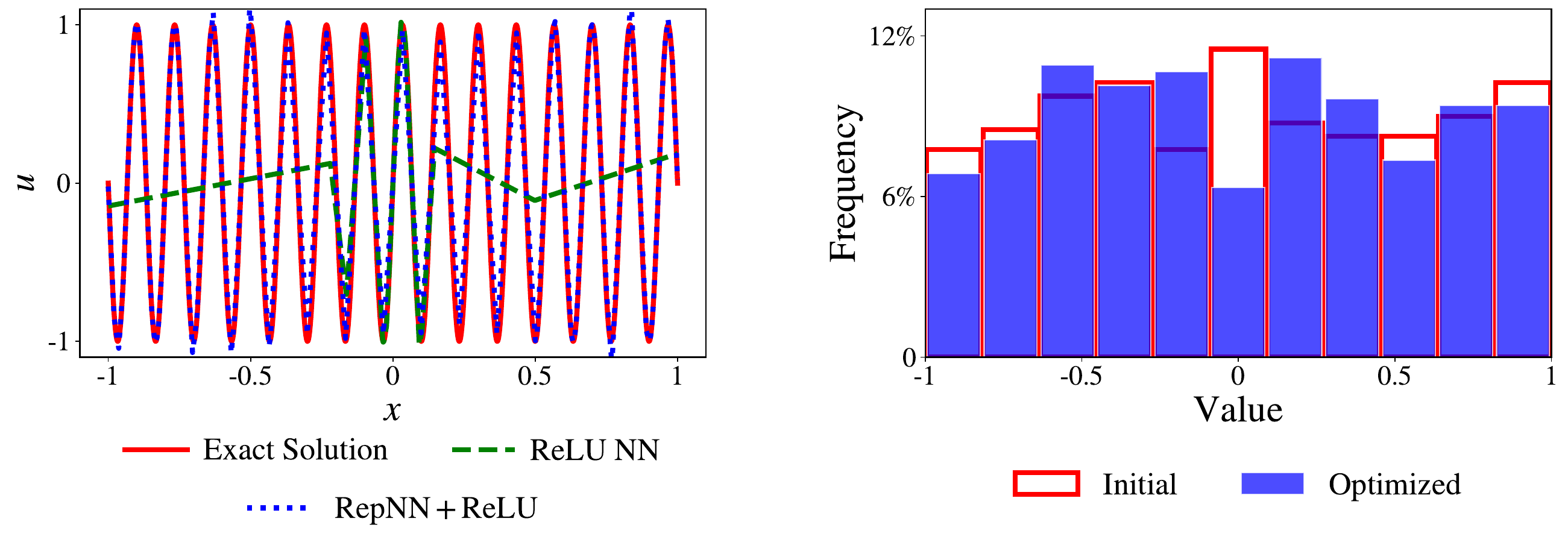}
 \caption{1D high-frequency function Eq.~\eqref{Methodology: high-frequency}: The exact solution and predictions obtained by the ReLU NN Eq.~\eqref{eq:ssingle-hidden-layer relu} and the reparameterized ReLU NN Eq.~\eqref{eq:shallow-dnn-reparameterization} (left). (Normalized) Histograms of the initial and optimized partition points in the reparameterized ReLU NN Eq.~\eqref{eq:shallow-dnn-reparameterization} (right), where optimized partition points with absolute values greater than 1 are not shown.}
 \label{fig Methodology: re_parameterization}
\end{figure}

We now apply the reparameterized ReLU NN Eq.~\eqref{eq:shallow-dnn-reparameterization} to the high-frequency function Eq.~\eqref{Methodology: high-frequency}, with the initialization
\begin{equation}\label{Methodology: wswmb1}
 W_{i}^{(1)} \sim \mathcal{N}(0,10^2), \quad
\tilde b_{i}^{(1)} \sim \mathcal{U}(-1,1).
\end{equation}
All other experimental settings are kept the same as those in Section~\ref{Methodology: ReLU neural networks}. 
As shown in Fig.~\ref{fig Methodology: re_parameterization} (left), the reparameterized ReLU NN successfully captures the high-frequency components of the target function. 
In addition, Fig.~\ref{fig Methodology: re_parameterization} (right) shows that both the initial and optimized partition points are well distributed within the computational domain, whereas the vanilla ReLU NN fails to achieve this behavior (see Fig.~\ref{fig Methodology: fit high-frequency} (right)).

When applying the reparameterized ReLU NN Eq.~\eqref{eq:shallow-dnn-reparameterization} to highly oscillatory and multiscale problems, a natural approach to improve accuracy is to increase the number of neurons in the hidden layer. However, this strategy is computationally expensive \cite{cheng2026generalized}. A more efficient alternative is to deepen the network. We empirically find that reparameterizing only the weights and biases of the first hidden layer is able to achieve good accuracy, which can be understood as an adaptive feature expansion and also simplifies the implementation. We note that a similar strategy has also been adopted in ~\cite{cheng2026generalized,Structured_First_Layer}. The forward pass of the resulting reparameterized ReLU DNN with $K$ layers is then defined as follows:
\begin{equation}\label{eq:deep-dnn-reparameterization}
\begin{cases}
Z_1=\relu \!\left(W_{1} \odot (x\mathbf{1}_m+\tilde b_1)\right),\\
Z_l = \relu(W_l Z_{l-1}+b_l),& l=2, \cdots, K-1,\\
u_{\bm{\theta}}(x) = W_K Z_{K-1}+b_K.\\
\end{cases}
\end{equation}
Here, the weights and biases follow the same definitions as those in Eqs.~\eqref{eq:shallow-dnn-reparameterization} and ~\eqref{DNNs}.

We now quantify the output and slope magnitudes of the reparameterized ReLU DNN, thereby providing guidance for designing an appropriate initialization strategy.
Consider a reparameterized ReLU DNN in which each hidden layer has a width \(H\), and the output layer consists of a single neuron. Following commonly used initialization strategies, we sample each remaining weight independently from a Gaussian distribution and initialize the biases to \(0\):
\begin{equation}
 W_{ij}^{(l)} \sim \mathcal{N}(0,\nu_l^2),
 \quad
 b_i^{(l)} = 0,
 \quad l=2,\cdots,K.
\end{equation}
We focus on a local region of the input domain where all ReLU neurons are activated (i.e., their pre-activations are positive), so that the network reduces to a linear function of the input. To simplify the analysis, we further assume that the input lies in a neighborhood of size \(\mathcal{O}(H^{-1})\) and ignore bias terms. Under these conditions, the network output and its derivative with respect to the input \(x\) can be written as
\begin{equation}\label{magnitude relu:u}
\begin{cases}
u_{\bm{\theta}}(x) = W_K W_{K-1}\cdots W_2 W_1 x,\\[2mm]
\displaystyle
\frac{\mathrm{d} u_{\bm{\theta}}}{\mathrm{d} x}(x) = W_K W_{K-1}\cdots W_2 W_1,
\end{cases}
\end{equation}
where \(W_1 \in \mathbb{R}^{H}\), \(W_l \in \mathbb{R}^{H \times H}\) for \(l = 2,\cdots,K-1\), and \(W_K \in \mathbb{R}^{1 \times H}\).

The scalar quantity \(S \triangleq W_K W_{K-1} \cdots W_2 W_1\) can be expanded as a sum over all paths through the network. Writing the matrix products explicitly,
\begin{equation}\label{eq:path-expansion}
S = \sum_{i_1=1}^{H} \sum_{i_2=1}^{H} \cdots \sum_{i_{K-1}=1}^{H}
W^{(K)}_{1,i_{K-1}} \; W^{(K-1)}_{i_{K-1},i_{K-2}} \;\cdots\; W^{(2)}_{i_2,i_1} \; W^{(1)}_{i_1},
\end{equation}
where the \(K-1\) summation indices correspond to the \(H\) neurons in each of the first \(K-1\) hidden layers. This yields \(H^{K-1}\) paths, each contributing a product of \(K\) independent zero-mean Gaussian weights. By the symmetry of the Gaussian initialization, \(\mathbb{E}[S] = 0\). Under the independent zero-mean Gaussian initialization, the second-order cross-moment between any two distinct path products is zero, while  the variance of each path product is \(\nu_K^2 \nu_{K-1}^2 \cdots \nu_2^2 \nu_s^2\).

\begin{theorem}\label{theorem1}
Let \(Y_1, Y_2, \dots, Y_m\) be random variables with \(\mathbb{E}[Y_i] = 0\) and \(\mathrm{Var}(Y_i) = \sigma^2\) for each \(i=1,\cdots,m\), where \(\sigma>0\). Suppose further that for any \(i\neq j\),
$
\mathbb{E}[Y_iY_j]=0.
$
Define
\(
S_m = \sum_{i=1}^{m} Y_i.
\)
Then
\(
S_m = \mathcal{O}_p(\sqrt{m}\,\sigma).
\)
\end{theorem}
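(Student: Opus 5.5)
The plan is to compute the second moment of $S_m$ exactly and then apply Chebyshev's inequality. Recall that $S_m=\mathcal{O}_p(\sqrt{m}\,\sigma)$ means the following: for every $\varepsilon>0$ there is a constant $C_\varepsilon$, independent of $m$, such that $\mathbb{P}(|S_m|>C_\varepsilon\sqrt{m}\,\sigma)\le\varepsilon$ for all $m$.

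\textbf{Step 1 (mean and variance).} By linearity, $\mathbb{E}[S_m]=\sum_i\mathbb{E}[Y_i]=0$. Next expand the square,
\begin{equation*}
\mathbb{E}[S_m^2]=\sum_{i=1}^m\mathbb{E}[Y_i^2]+\sum_{i\neq j}\mathbb{E}[Y_iY_j].
\end{equation*}
Since each $Y_i$ has zero mean, $\mathbb{E}[Y_i^2]=\mathrm{Var}(Y_i)=\sigma^2$. The cross terms vanish by the pairwise orthogonality hypothesis $\mathbb{E}[Y_iY_j]=0$. Hence $\mathrm{Var}(S_m)=\mathbb{E}[S_m^2]=m\sigma^2$. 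The finiteness of the second moments, which is implicit in $\mathrm{Var}(Y_i)=\sigma^2<\infty$, guarantees that all these expectations exist. Consequently the expansion is legitimate, because $|Y_iY_j|\le (Y_i^2+Y_j^2)/2$.

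\textbf{Step 2 (Chebyshev).} For any $C>0$, Chebyshev's (Markov's) inequality gives
\begin{equation*}
\mathbb{P}\bigl(|S_m|>C\sqrt{m}\,\sigma\bigr)\le\frac{\mathbb{E}[S_m^2]}{C^2 m\sigma^2}=\frac{1}{C^2}.
\end{equation*}
Given $\varepsilon>0$, choose $C_\varepsilon=\varepsilon^{-1/2}$. The resulting bound is uniform in $m$, which is exactly the definition of $S_m=\mathcal{O}_p(\sqrt{m}\,\sigma)$.

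There is no real obstacle here. The only point needing care is that no independence is used: only pairwise orthogonality enters, in the vanishing of the cross terms. This is precisely the property verified above for the $H^{K-1}$ path products in Eq.~\eqref{eq:path-expansion}. Distinct paths differ in at least one weight, and that weight appears to the first power and has zero mean, so applying the theorem with $m=H^{K-1}$ and $\sigma=\nu_K\cdots\nu_2\nu_s$ yields $S=\mathcal{O}_p(H^{(K-1)/2}\nu_K\cdots\nu_2\nu_s)$.
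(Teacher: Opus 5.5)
Your proof is correct and follows the paper's argument essentially verbatim. You use only pairwise orthogonality to get $\mathbb{E}[S_m^2]=m\sigma^2$, then apply Chebyshev's inequality with $C_\varepsilon=\varepsilon^{-1/2}$, exactly as in the appendix; your remarks on finite second moments and on why distinct path products are orthogonal are accurate additions rather than a different route.
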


The proof of Theorem~\ref{theorem1} can be found in ~\ref{Appendix: Proof}.

Applying Theorem~\ref{theorem1} to the $H^{K-1}$  path contributions, we obtain
\begin{equation}\label{eq:scalar-estimate}
S = \mathcal{O}_p\!\left(\nu_K\nu_{K-1}\cdots\nu_2\nu_s H^{\frac{K-1}{2}}\right).
\end{equation}
Substituting this into the expressions in Eq. \eqref{magnitude relu:u} and noting that \(x = \mathcal{O}(H^{-1})\) in the local region under consideration, we arrive at
\begin{equation}\label{magnitude2 relu:u}
\begin{cases}
 u_{\bm{\theta}}(x) = S \cdot x = \mathcal{O}_p\!\left(\nu_K\nu_{K-1}\cdots\nu_2\nu_s H^{\frac{K-3}{2}}\right),\\[2mm]
 \displaystyle
 \frac{\mathrm{d} u_{\bm{\theta}}}{\mathrm{d} x}(x) = S = \mathcal{O}_p\!\left(\nu_K\nu_{K-1}\cdots\nu_2\nu_s H^{\frac{K-1}{2}}\right).
\end{cases}
\end{equation}
The factor of \(H\) difference between the two estimates arises because the output carries an additional factor of \(x = \mathcal{O}(H^{-1})\) relative to the derivative.

\begin{remark}\label{remark:he_relu}
According to Eq.~\eqref{magnitude2 relu:u}, the He initialization Eq.~\eqref{DNN:he_initialition} yields
\(u_{\bm{\theta}}(x) = \mathcal{O}_p\!\left(H^{-1}\right)\) and
\(\frac{\mathrm{d} u_{\bm{\theta}}}{\mathrm{d} x} (x)= \mathcal{O}_p(1)\) on a local region.
The relatively small initial slope may cause the vanilla DNN to struggle in fitting high-frequency functions.
\end{remark}

\begin{remark}
According to Eq.~\eqref{magnitude2 relu:u}, increasing the magnitude of
\(\frac{\mathrm{d} u_{\bm{\theta}}}{\mathrm{d} x}(x)\) generally also increases the scale of \(u_{\bm{\theta}}(x)\).
For multiscale functions with small amplitudes but steep local slopes, how to balance the scales of \(u_{\bm{\theta}}(x)\) and \(\frac{\mathrm{d} u_{\bm{\theta}}}{\mathrm{d} x}(x)\) in the RepNN remains an important open issue.
\end{remark}

\subsubsubsection{Reparameterized NN with \texorpdfstring{$\tanh$}{tanh} activation}\label{Section: redeepNN}

The effectiveness of the reparameterized ReLU NN has been demonstrated in Section~\ref{Subsection: reRELU-DNN}. However, the ReLU activation function exhibits limited nonlinearity and, more importantly, satisfies \(\mathrm{ReLU} \in C^0(\mathbb{R}) \setminus C^1(\mathbb{R})\). Consequently, ReLU-based NNs are less commonly used in PDE-related problems.
In this subsection, we extend the reparameterized ReLU NN Eq.~\eqref{eq:deep-dnn-reparameterization} to a broader class of reparameterized NN, more specifically, the DNNs with $\mathrm{tanh}$ activation that is widely used in scientific machine learning models, e.g., physics-informed neural networks (PINNs) \cite{RAISSI1} and deep operator networks (DeepONet) \cite{lu2021learning}, due to its smoothness and stronger nonlinearity.

We first rewrite the expression for the output of the reparameterized DNN with $K$ layers in a more general way:
\begin{equation}\label{eq:general-deep-dnn-reparameterization}
\begin{cases}
Z_1=\sigma \!\left(W_{1} \odot(x\mathbf{1}_m+\tilde b_1)\right),\\
Z_l = \sigma(W_l Z_{l-1}+b_l),& l=2, \cdots, K-1,\\
u_{\bm{\theta}}(x) = W_K Z_{K-1}+b_K,
\end{cases}
\end{equation}
where $\sigma$ denotes a nonlinear activation function, and  $\sigma  = \tanh$ here. All trainable parameters are defined in the same way as in the reparameterized ReLU DNN Eq.~\eqref{eq:deep-dnn-reparameterization}.  The transition from ReLU to tanh, however, calls for a careful discussion of the partition-point analogy. For ReLU, the partition point $x_p = -\tilde{b}_1$ marks a hard kink in the piecewise-linear output, giving rise to a genuine mesh-like decomposition of the input domain. For tanh, the same condition $W_{1} \odot(x\mathbf{1}_m+\tilde b_1)=0$  defines the sign-change boundary separating positive and negative saturation regions. Near this boundary, tanh behaves approximately linearly ($\tanh(z) \approx z$ for $|z| \ll 1$), and the local slope is proportional to $W_1$. Thus, while tanh does not produce a FEM-style grid of kinks, the reparameterization still achieves the two essential objectives identified in Section~\ref{Methodology: ReLU neural networks}: (i) large $W_1$ values create steep local transitions that can match high-frequency target features, and (ii) uniformly distributed $\tilde{b}_1$ ensure that these transitions are spread across the domain rather than concentrated at the origin. The key difference is that tanh provides \textit{soft} rather than \textit{hard} partitioning - the transition from negative to positive saturation is smooth, controlled by the local slope $W_1$. This soft partitioning, combined with tanh's built-in saturation (which will be discussed in detail in the Appendix), makes tanh the preferred activation for the proposed framework.

Similarly, we present a quantitative analysis of the output and slope magnitudes of the RepNN with $\tanh$ activation to guide the initialization. The initialization parameters are set in the same way as in Section~\ref{Subsection: reRELU-DNN}. By the symmetry of the Gaussian initialization, the RepNN satisfies $\mathbb{E}[u_{\bm{\theta}}(x)] = 0$.
We focus on a local region satisfying $x=\mathcal{O}(H^{-1})$.

%
%

For the output amplitude, since $|\tanh(x)|<1$ for all $x\in\mathbb{R}$, each neuron in the final hidden layer produces a bounded output of variance $\mathcal{O}(1)$. The output layer computes a weighted sum of these $H$ bounded terms with weights $W_{i}^{(K)} \sim \mathcal{N}(0, \nu_K^2)$. Applying Theorem~\ref{theorem1} to this sum yields
\begin{equation}
 u_{\bm{\theta}}(x) = \mathcal{O}_p(\nu_K H^{\frac{1}{2}} ).
\end{equation}

For the slope magnitude, we provide two estimates corresponding to different regimes. Near the origin, the Taylor expansion for $\tanh(\cdot)$ reads as
\begin{equation}
\tanh(x)=x+\mathcal{O}(x^3), \qquad \tanh'(x)=1+\mathcal{O}(x^2)
\end{equation}
implies $\tanh'(x) \approx 1$ for $|x| \ll 1$. Under this local linear approximation, each activation Jacobian reduces to the identity, and the network behaves like a deep linear network. The path-expansion argument of Section~\ref{Subsection: reRELU-DNN} then applies directly, giving
\begin{equation}\label{magnitude tanh:u}
 \frac{\mathrm{d} u_{\bm{\theta}}}{\mathrm{d} x}(x)|_{\mathrm{lin}}
 = \mathcal{O}_p\!\left(\nu_K\nu_{K-1}\cdots\nu_2\nu_s H^{\frac{K-1}{2}}\right).
\end{equation}
In contrast, when $\nu_s \gg 1$, the large first-layer weights drive most neurons deep into saturation, where $\tanh'(x) \approx 0$. In this regime, only a small fraction of the $H^{K-1}$ paths through the network remain active-those for which every neuron along the path falls within the linear region of $\tanh$. The number of such fully active paths is $\mathcal{O}(1)$, so the sum involves only a constant number of terms. Applying Theorem~\ref{theorem1} with $m = \mathcal{O}(1)$ yields the saturation estimate
\begin{equation}
 \frac{\mathrm{d} u_{\bm{\theta}}}{\mathrm{d} x}(x)|_{\mathrm{sat}}
 = \mathcal{O}_p\!\left(
 \nu_K\nu_{K-1}\cdots\nu_2\nu_s \right).
\end{equation}
Together, the two estimates bracket the possible slope magnitudes: the linear estimate describes the expressive regime where many pathways are active, while the saturation estimate describes the initialization regime where $\tanh$ naturally constrains the gradient scale.
Similar to Remark~\ref{remark:he_relu}, the Xavier initialization leads to $u_{\bm{\theta}} = \mathcal{O}_p(1)$ and $\frac{\mathrm{d} u_{\bm{\theta}}}{\mathrm{d} x} = \mathcal{O}_p(1)$, which may also cause the vanilla DNN to struggle in fitting high-frequency functions as in ReLU NNs.

We note that a systematic comparison on the effects of the architectures, e.g. width/depth, and activation, on the predicted accuracy of RepNN based on a one-dimensional regression problem is conducted in \ref{sec:nn_architec} to justify the above discussion. 

\subsubsubsection{Reparameterized tensor DNN for high-dimensional problems}\label{Section:RetensorNN}
The RepNN resolves high-frequency features by inducing a fine partition of the input domain through the first hidden layer. However, as the input dimension \(d\) grows, the number of neurons required to maintain a given partition density grows exponentially, leading to the curse of dimensionality. To address this, we adopt a tensor product structure~\cite{novikov2015tensorizing,newman2018stabletensorneuralnetworks,wang202ing} that decomposes the high-dimensional approximation into \(d\) one-dimensional sub-problems, each handled by an independent reparameterized sub-network, and combines them through a tensor product.

\begin{figure}[!ht]
 \centering
 \includegraphics[width=1.0\textwidth]{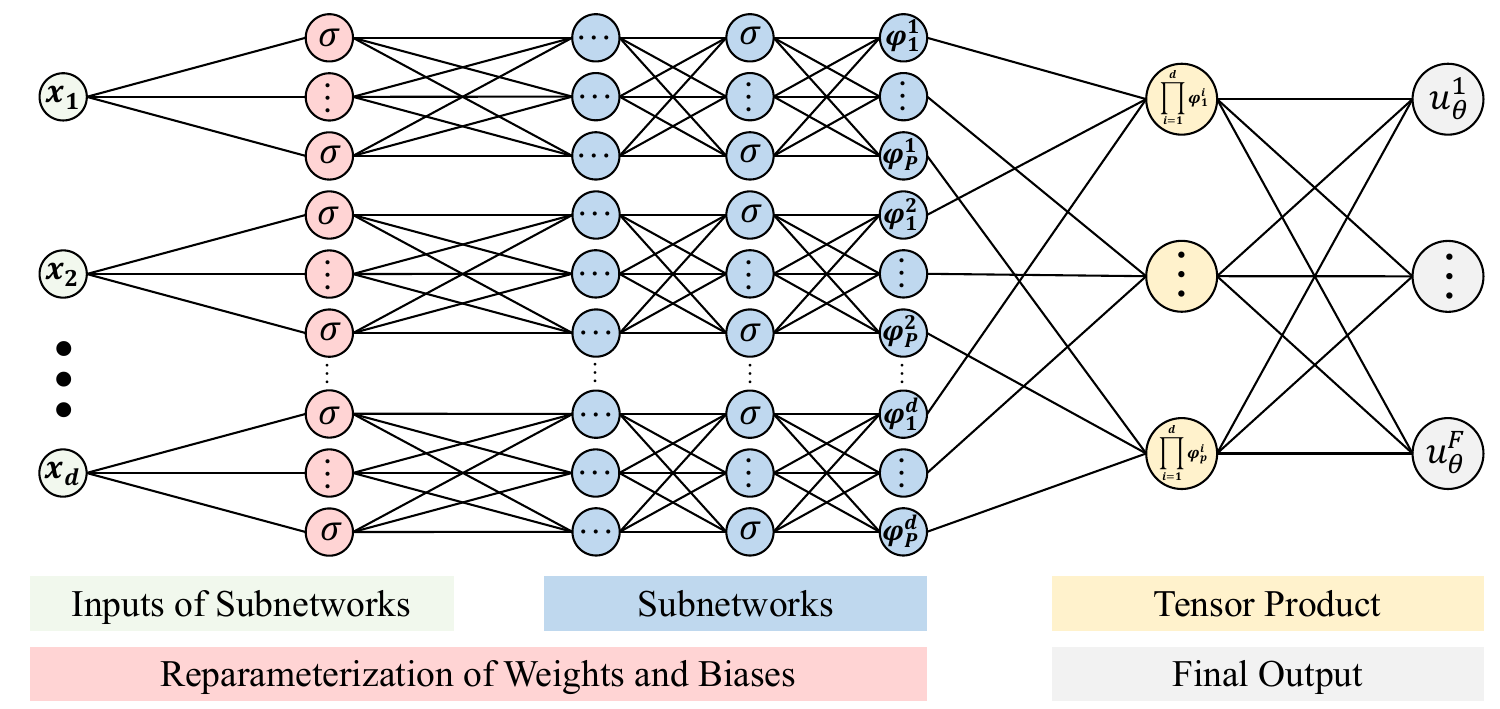}
 \caption{The architecture of reparameterized tensor deep neural networks.}
 \label{fig Methodology: RES-DNN.}
\end{figure}

Fig.~\ref{fig Methodology: RES-DNN.} illustrates the overall architecture. For a \(d\)-dimensional input \(\bm{x}=(x_1,\cdots,x_d)\) with \(d \geq 2\), the network consists of \(d\) sub-networks, each defined as in Eq.~\eqref{eq:general-deep-dnn-reparameterization} and having \(p\) neurons in its output layer. We denote these sub-networks by
\begin{equation}
u_{\bm{\theta}}^j(x_j) = \left(\varphi_1^j(x_j), \cdots, \varphi_p^j(x_j)\right) \in \mathbb{R}^p,
\quad j = 1,\cdots,d.
\end{equation}
Each sub-network inherits the reparameterized first layer, so the partition points and slope scales are controlled independently along each coordinate direction. The final \(F\)-dimensional output is obtained via the tensor product
\begin{equation}\label{tensor re dnn}
u_{\bm{\theta}}(\bm{x}) = W_f \cdot \left(u_{\bm{\theta}}^1(x_1) \odot u_{\bm{\theta}}^2(x_2) \odot \cdots \odot u_{\bm{\theta}}^d(x_d)\right) + b_f,
\end{equation}
where \(W_f = (W_{ij}^{(f)}) \in \mathbb{R}^{F \times p}\) and \(b_f = (b_i^{(f)}) \in \mathbb{R}^{F}\) are trainable parameters. 
The approximation capacity of tensor neural network (TNN) is established in ~\cite{TensorNN}. In  RepNN, we retain only the tensor product structure for function representation and the loss function for training the TNNs is similar as in Eq. \eqref{eq:dnn-mse-loss}.


We conclude this subsection with a quantitative analysis of the output and slope magnitudes. The weight matrix \(W_f\) and bias \(b_f\) are initialized as
\begin{equation}
 W_{ij}^{(f)} \sim \mathcal{N}(0,\nu_f^2),
 \quad
 b_i^{(f)} = 0.
\end{equation}
For simplicity, set \(F=1\). By the symmetry of the Gaussian initialization, \(\mathbb{E}[u_{\bm{\theta}}(\bm{x})]=0\).

For the output magnitude, the tensor product in Eq. \eqref{tensor re dnn} yields a vector in \(\mathbb{R}^p\) whose \(i\)-th component is \(\prod_{j=1}^{d} \varphi_i^j(x_j)\). From the one-dimensional analysis of Section~\ref{Section: redeepNN}, each \(\varphi_i^j(x_j)\) is the output of a reparameterized DNN with a single output neuron, which scales as \(\mathcal{O}_p(\nu_K H^{\frac{1}{2}})\). Taking the product over \(d\) independent sub-networks, each component of the tensor product has magnitude \(\mathcal{O}_p(\nu_K^d H^{\frac{d}{2}})\). The final output is obtained by multiplying this \(p\)-dimensional vector by \(W_f \in \mathbb{R}^{1 \times p}\) with entries \(W_{ij}^{(f)} \sim \mathcal{N}(0, \nu_f^2)\). Applying Theorem~\ref{theorem1} to this sum of \(p\) terms yields
\begin{equation}
 u_{\bm{\theta}}(\bm{x})
 = \mathcal{O}_p\!\left(
 \nu_f \sqrt{p}\,\nu_K^d H^{\frac{d}{2}}
 \right).
\end{equation}

For the partial derivatives, we note that only the \(j\)-th sub-network contributes to \( \frac{\partial u_{\bm{\theta}}}{ \partial x_j}\), while the remaining \(d-1\) sub-networks enter through their output values. Specifically, for each component \(i = 1,\cdots,p\),
\[
\frac{\partial}{\partial x_j} \prod_{k=1}^{d} \varphi_i^k(x_k)
= \frac{\mathrm{d} \varphi_i^j}{\mathrm{d} x_j}(x_j) \cdot \prod_{k \neq j} \varphi_i^k(x_k).
\]
Using the slope estimates from Section~\ref{Section: redeepNN} for the derivative term and the output estimate \(\mathcal{O}_p(\nu_K H^{\frac{1}{2}})\) for each of the \(d-1\) remaining factors, then applying Theorem~\ref{theorem1} to the sum over the \(p\) components combined by \(W_f\), we obtain
\begin{equation}
\begin{cases}
 \displaystyle
\frac{\partial u_{\bm{\theta}}}{\partial x_j}(\bm{x})|_{\mathrm{lin}}
= \mathcal{O}_p\!\left(
\nu_{K-1}\cdots\nu_2\nu_s\nu_f\sqrt{p}\,\nu_K^{d}H^{\frac{d+K-2}{2}}
\right),\\[4mm]
 \displaystyle
\frac{\partial u_{\bm{\theta}}}{\partial x_j}(\bm{x}) |_{\mathrm{sat}}
= \mathcal{O}_p\!\left(
\nu_{K-1}\cdots\nu_2\nu_s\nu_f\sqrt{p}\,\nu_K^{d}H^{\frac{d-1}{2}}
\right).
\end{cases}
\end{equation}
In the linear estimate, the derivative factor contributes \(\nu_{K-1}\cdots\nu_2\nu_s H^{\frac{K-1}{2}}\) and the \(d-1\) output factors each contribute \(\nu_K H^{\frac{1}{2}}\), combining to give the exponent \(\frac{d+K-2}{2}\). In the saturation estimate, the derivative factor loses the \(H^{\frac{K-1}{2}}\) dependence (retaining only \(\nu_{K-1}\cdots\nu_2\nu_s\)), reducing the exponent to \(\frac{d-1}{2}\). These estimates extend the one-dimensional analysis of Section~\ref{Section: redeepNN} to the tensor setting and guide the choice of initialization hyperparameters for high-dimensional problems.

\section{Numerical examples}\label{section: Numerical examples}

In this section, we present extensive numerical experiments to verify the effectiveness of the proposed method, including approximating one- and four-dimensional multiscale functions, solving forward and inverse PDE problems in combination with physics-informed neural networks (PINNs) as well as  deep operator network (DeepONet). Further, the relative \(L_2\) error is employed to evaluate the accuracy of the proposed method in all test cases, which is defined as follows:
\begin{equation}\label{Numerical_examples_L2norm}
\left\|\epsilon\right\|_2=\frac{\sqrt{\sum_{i=1}^N\left|u_{\bm{\theta}}(\bm{x}_i)-u(\bm{x}_i)\right|^2}}{\sqrt{\sum_{i=1}^N\left|u(\bm{x}_i)\right|^2}},
\end{equation}
where \(N\) represents the number of test points and is typically set to \(10000\), \(u(\bm{x}_i)\) denotes the exact or reference solution at the test point \(\bm{x}_i\), and \(u_{\bm{\theta}}(\bm{x}_i)\) represents the corresponding neural network prediction. The test points are randomly distributed over the computational domain.

In terms of the implementations for all the models, we use  PyTorch to perform all the experiments, with the data type set as \texttt{float32}. 
In addition, to show the robustness of the present model, the RepNNs utilized in this section are trained using the same experimental hyperparameters, such as the architectures, and initialization, etc. In particular, (1)
the weights \(W_1\) and the auxiliary bias variables \(\tilde b_1\) in the first hidden layer are initialized as follows:
\begin{equation}\label{numerical wb}
 W_{i}^{(1)} \sim \mathcal{N}(0,10^2), \quad
\tilde b_{i}^{(1)} \sim \mathcal{U}(-1,1),
\end{equation}
and (2) the remaining weights are sampled from zero-mean Gaussian distributions, with the corresponding biases initialized to zeros. Unless otherwise stated, we set
\begin{equation}\label{eq:init_computation}
 K=5,\ p=H=300,\ \nu_{K}=\nu_{f}=H^{-\frac{1}{2}},\ \nu_{l}=1\ \text{for}\ l = 2, \cdots, K-1,
\end{equation}
throughout this study. The loss function used to train all the NNs in regression problems is the same as in Eq. \eqref{eq:dnn-mse-loss}. In addition, the loss functions for solving PDE problems in the context of PINNs and DeepONet are provided in  \ref{Appendix: pinn} and \ref{Appendix: deepOnet}, respectively. Moreover, the Adam optimizer is employed and run for 50,000 iterations to optimize the loss function, with a cosine annealing schedule for the learning rate. All training and testing are conducted on a single NVIDIA GeForce RTX 3090 GPU. Note that with the aforementioned architectures as well as initialization, we have
\begin{equation}
u_{\bm{\theta}}(\bm{x}) = \mathcal{O}_p(1),
\end{equation}
while for the partial derivative \(\frac{\partial u_{\bm{\theta}}}{\partial x_i}(\bm{x})\), \(i=1,\cdots,d\), we obtain
\begin{equation}
 \frac{\partial u_{\bm{\theta}}}{\partial x_i}(\bm{x}) |_{\mathrm{lin}}
 = \mathcal{O}_p\!\left(\nu_s H^{\frac{3}{2}}\right) \quad \text{and} \quad \frac{\partial u_{\bm{\theta}}}{\partial x_i}(\bm{x}) |_{\mathrm{sat}}
 = \mathcal{O}_p\!\left(\nu_s H^{-\frac{1}{2}} \right).
\end{equation}
We conduct an analysis in \ref{Section: gradient dynamics} to address the concern that initialization with large weights of DNNs may cause gradient explosion during training based on the RepNN with $\tanh$ activation.

\subsection{Regression problem}\label{section: Fitting problem}

\subsubsection{1D high-frequency function}\label{section: 1D high-frequency function}

We first apply the RepNN to approximating the following high-frequency target function:
\begin{equation}\label{sec100pi High-frequency}
u(x)=\sin(100\pi x), \qquad x\in[-1,1].
\end{equation}
To eliminate the effect of training data on the performance, we assume that we have sufficient measurements on the target function, and we employ the minibatch strategy in the training with a batch size \(N=512\) in Eq.~\eqref{eq:dnn-mse-loss}.

As shown in Fig.~\ref{fig Methodology: loss_dnnre}, the vanilla DNN achieves training losses on the order of $10^{-1}$, accompanied by relatively large approximation errors. In contrast, the proposed RepNN  significantly improves both optimization and approximation performance. For the RepNN, the training loss decreases to $6.96\times10^{-7}$, which is approximately 6 orders smaller compared to the corresponding vanilla DNN. The associated testing error is reduced to $1.24\times10^{-3}$,  demonstrating the effectiveness of the proposed reparameterization in capturing highly oscillatory features.

\begin{figure}[!ht]
    \centering
    \includegraphics[width=1.0\textwidth]{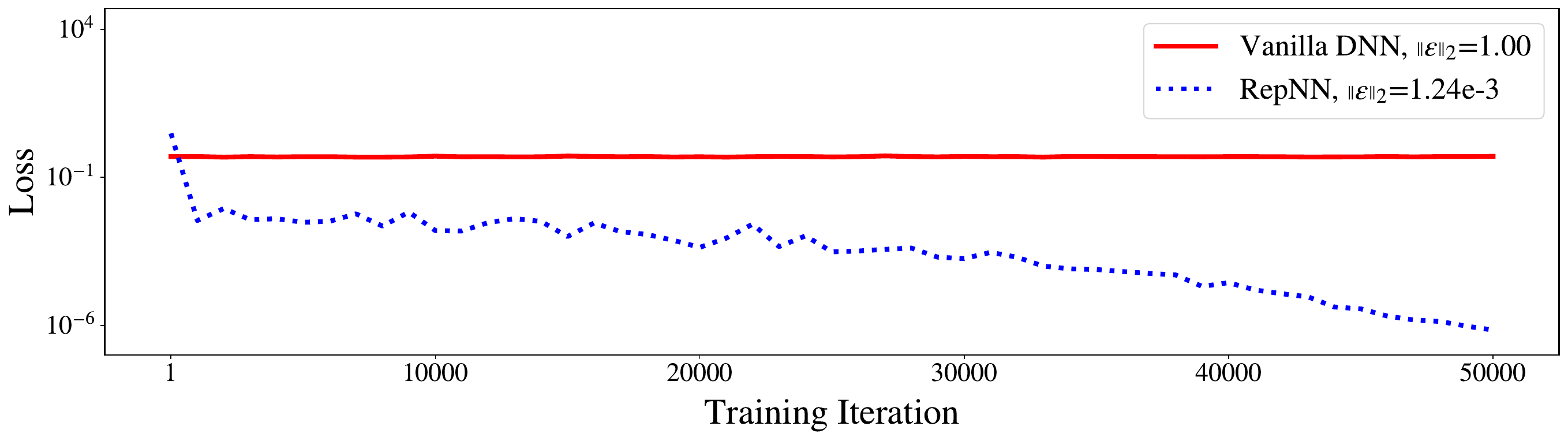} 
    \caption{1D high-frequency function Eq.~\eqref{sec100pi High-frequency}: Training loss curves and relative $L_2$  errors $\|\epsilon\|_2$ of the vanilla DNN Eq.~\eqref{DNNs} and the RepNN Eq.~\eqref{eq:general-deep-dnn-reparameterization}. The definition of the $\|\epsilon\|_2$ is given in Eq.~\eqref{Numerical_examples_L2norm}.}
    \label{fig Methodology: loss_dnnre}
\end{figure}

We further conduct two ablation studies on  \(W_1\) and  \(b_1\) in the RepNN Eq.~\eqref{eq:general-deep-dnn-reparameterization}. We first investigate the effect of the initialization scale of \(W_1\). Specifically, we set \(\nu_s=1,10,\) and \(20\), which correspond to different initialization scales of \(W_1\).
\begin{figure}[!ht]
    \centering
    \includegraphics[width=1.0\textwidth]{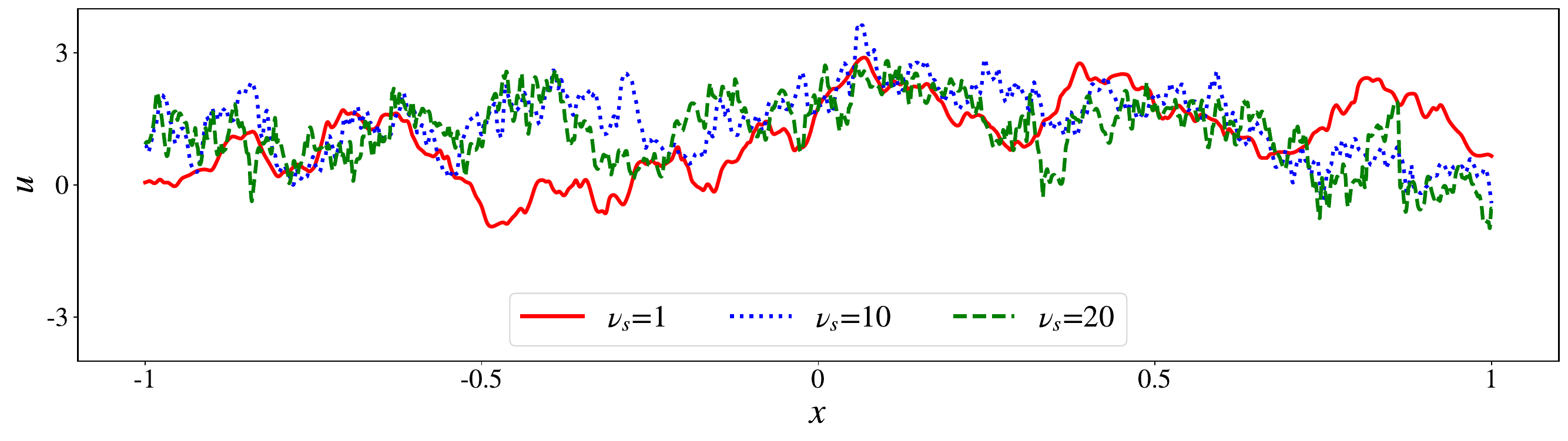} 
\caption{Visualization of the initialized RepNN Eq.~\eqref{eq:general-deep-dnn-reparameterization} with different values of $\nu_s$.}
    \label{fig Methodology: Visualization different variances.}
\end{figure}

Fig.~\ref{fig Methodology: Visualization different variances.} illustrates the initialized outputs of the RepNN with different values of \(\nu_s\). As observed, increasing \(\nu_s\) produces more pronounced oscillatory patterns in the initial output, which is consistent with the analysis in Section~\ref{Section: redeepNN}. We present in Fig.~\ref{fig Methodology: w loss and error.} the training loss curves and relative \(L_2\) errors obtained with different values of \(\nu_s\). As shown, the losses for the RepNNs with \(\nu_s=10,\) and \(20\)  decrease faster than that with $\nu_s = 1$ at the first thousand iterations. In addition, the RepNN with \(\nu_s=1,10,\) and \(20\) (1) exhibits similar convergence behavior during training,  and (2) achieves relative \(L_2\) errors of \(2.71\times10^{-3}\), \(1.24\times10^{-3}\), and \(1.13\times10^{-3}\), respectively.  Although different values of \(\nu_s\) produce varying initial outputs, all testing errors remain at the same order of magnitude. Since \(W_1\) is trainable and can be automatically adapted during training, the RepNN is robust and insensitive to the value of \(\nu_s\) within the tested range.

\begin{figure}[!ht]
    \centering
    \includegraphics[width=1.0\textwidth]{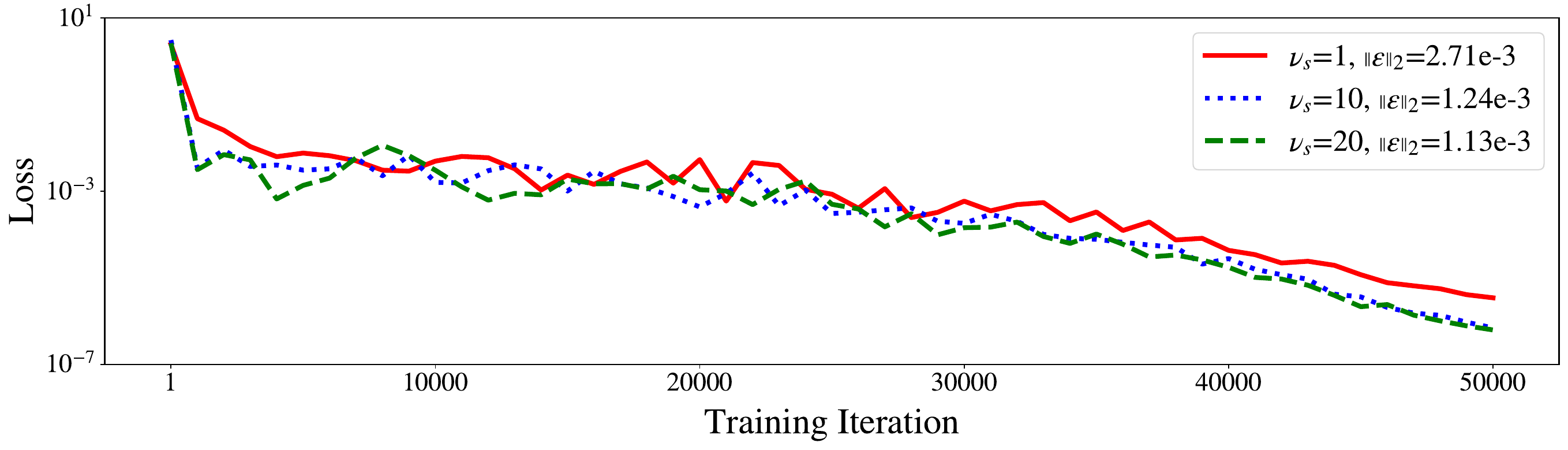} 
    \caption{1D high-frequency function Eq.~\eqref{sec100pi High-frequency}: Training loss curves and relative $L_2$  errors $\|\epsilon\|_2$ of the RepNN Eq.~\eqref{eq:general-deep-dnn-reparameterization} with different values of $\nu_s$.}
    \label{fig Methodology: w loss and error.}
\end{figure}

Next, we examine the impact of the initialization of $b_1$ on the performance of the RepNN.  Recall that \(b_1\) is reparameterized as
\begin{equation}
b_{1} = W_{1} \odot \tilde b_{1}.
\end{equation}
Accordingly,  the  partition points are given by $\bm{x}_p = -\tilde{b}_1$. Therefore, the initialization of \(\tilde b_1\) directly determines the distribution of the initial partition points. Here we consider three different initializations for \(\tilde b_1\): \(\tilde b_1=0\), equally spaced points in \([-1,1]\), denoted by \(\tilde b_1 \sim \mathcal{E}(-1,1)\), and uniformly distributed random points in \([-1,1]\), denoted by \(\tilde b_1 \sim \mathcal{U}(-1,1)\).

\begin{figure}[!ht]
    \centering
    \includegraphics[width=1.0\textwidth]{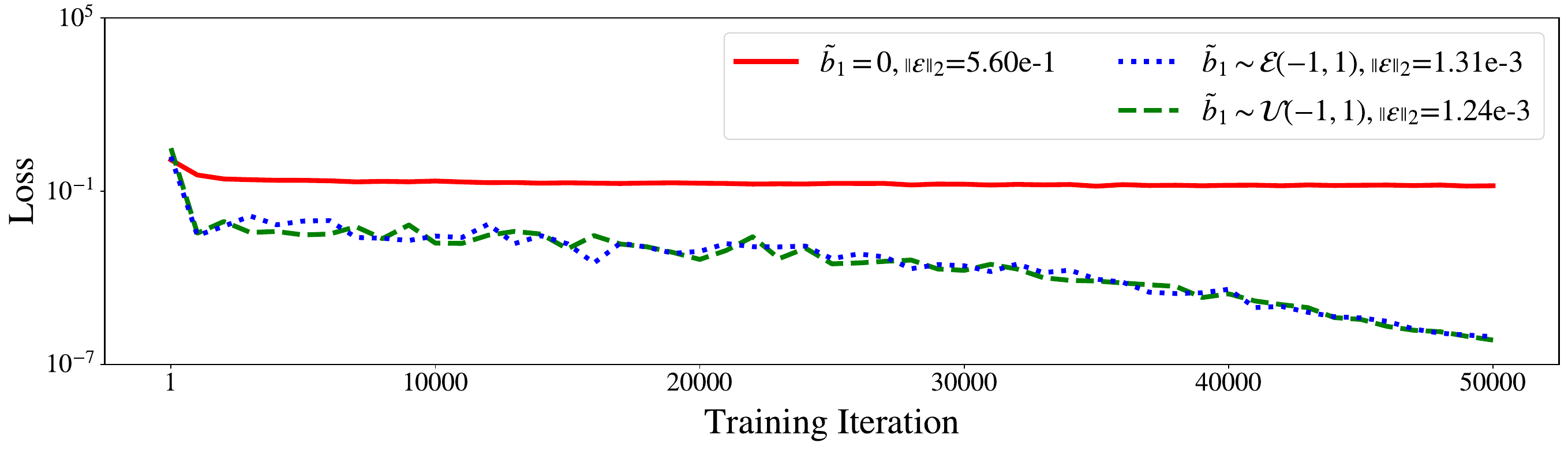} 
    \caption{1D high-frequency function Eq.~\eqref{sec100pi High-frequency}: Training loss curves  and relative $L_2$ errors of the RepNN Eq.~\eqref{eq:general-deep-dnn-reparameterization} with different initializations of $\tilde{b}_1$.}
    
    \label{fig Methodology: b loss and error}
\end{figure}

We present in Fig.~\ref{fig Methodology: b loss and error} the training loss curves and the corresponding relative \(L_2\) errors obtained by different initializations of \(\tilde b_1\). As observed, initializing all entries of \(\tilde b_1\) as zero gives a relative \(L_2\) error of \(5.60\times 10^{-1}\). In contrast, when the initial partition points are spread over \([-1,1]\), the prediction accuracy is significantly improved. Specifically, the relative \(L_2\) errors obtained with \(\tilde b_1 \sim \mathcal{E}(-1,1)\) and \(\tilde b_1 \sim \mathcal{U}(-1,1)\) are \(1.31\times 10^{-3}\) and \(1.24\times 10^{-3}\), respectively. These results indicate that distributing the initial partition points throughout the target interval is beneficial for high-frequency approximation.

Finally, we perform a comparative study to highlight the necessity of reparameterization as well as the initialization for \(W_1\) and \(b_1\) in the proposed RepNN. To this end, we consider two additional scenarios. In scenario (a), we employ the vanilla DNN as shown in 
Eq. \eqref{DNNs}, and use the same initialization in  Eqs.~\eqref{numerical wb} and \eqref{eq:init_computation} except for setting $b_1 = 0$.
In scenario (b), we use the RepNN in Eq. \eqref{eq:general-deep-dnn-reparameterization}, but we only employ the new initialization for \(b_1\) as \(\tilde b_1 \sim \mathcal{U}(-1,1)\) and keep the initializations of the remaining parameters the same as in the vanilla DNNs, i.e., Xavier initialization. For both scenarios, the trainable parameters are defined in the same manner as those in Eq.~\eqref{DNNs} and  Eq.~\eqref{eq:general-deep-dnn-reparameterization}. This comparison allows us to isolate the individual effects of the reparameterization as well as initialization of \(W_1\) and \(b_1\), thereby demonstrating the importance of simultaneous reparameterization and initialization for both quantities.
 

\begin{figure}[!ht]
    \centering
    \includegraphics[width=1.0\textwidth]{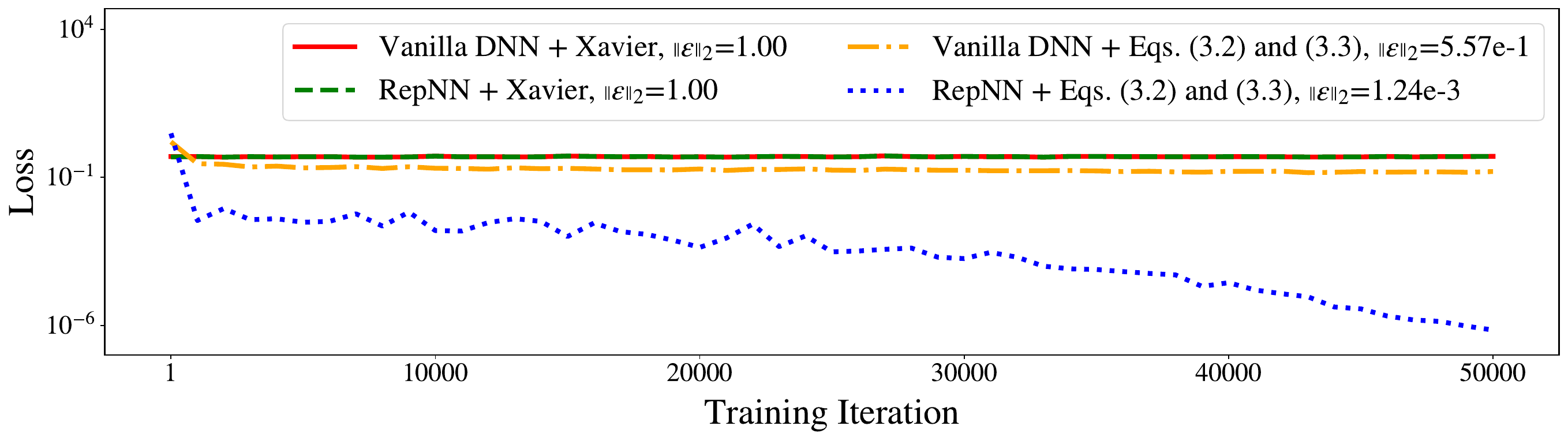} 
    \caption{1D high-frequency function Eq.~\eqref{sec100pi High-frequency}: Training loss curves  and relative $L_2$ errors of the vanilla DNN and RepNN.}
    \label{fig Methodology: loss_REWB}
\end{figure}

We present in Fig.~\ref{fig Methodology: loss_REWB} the training loss curves and the corresponding relative \(L_2\) errors of the vanilla DNN, the RepNN, and its variants. As observed, the RepNN with the proposed initialization achieves the smallest training loss and relative \(L_2\) error among all tested models. The vanilla DNN and  RepNN with the Xavier initialization for the weights  achieve training losses and relative \(L_2\) errors at the same order, which are about 6 and 3 orders greater than the RepNN with the initialization in Eqs.~\eqref{numerical wb} and \eqref{eq:init_computation}. Further, the vanilla DNNs with the initialization proposed in this study reduces the training loss by approximately one order of magnitude and achieves a relative \(L_2\) error of \(5.57\times 10^{-1}\), which is about 2 orders larger than the RepNN with the initialization in Eqs.~\eqref{numerical wb} and \eqref{eq:init_computation}.   These results indicate that the  reparameterization and initialization of \(W_1\) and \(b_1\) improve the predicted accuracy in resolving high-frequency oscillatory components.

\subsubsection{1D multiscale function}\label{section: 1D multiscale function}
 We now consider the following 1D multiscale function adopted from Ref.~\cite{WANG2021113938}:
\begin{equation}\label{eq:1d-multiscale}
u(x) = \sin(\pi x) + 0.1 \sin(50\pi x), \quad x \in [-1, 1].
\end{equation}
This function has a sparse dominant spectrum, exhibiting low frequency in the macro-scale and high frequency in the micro-scale. It is well known that vanilla DNNs struggle to capture high-frequency components. In this subsection, we consider the vanilla DNN and three specified DNNs: MFF with the Fourier feature mapping parameter { \(\gamma_x=5.0\) \(\bigl(x\mapsto[\cos(2\pi B_xx),\sin(2\pi B_xx)],\ B_x=(B_i^{(x)})\in\mathbb{R}^{\frac{H}{2}},\ B_i^{(x)} \sim\mathcal{N} (0,\gamma_x^2)\bigl)\) }, MscaleDNN with scale coefficients \(\{1,2,4\}\), and the RepNN. In all test cases, we also assume that we have sufficient training data and the minibatch training strategy is employed. The batch size is set to $N_u=512$, and the training points $\{ x_i,u(x_i)\}_{i=1}^{N_u}$ are randomly sampled within the domain $[-1,1]$.


We present in Table~\ref{Table 1D multiscale function:} the results obtained by different models. As shown, the RepNN, MscaleDNN, and MFF all reduce the training loss by approximately 5-6 orders of magnitude and achieve testing errors 2-3 orders of magnitude smaller than the vanilla DNN. 

\begin{table}[h]	
\setlength{\abovecaptionskip}{0cm}
		\setlength{\belowcaptionskip}{0.2cm}
\caption{1D multiscale functions Eqs.~\eqref{eq:1d-multiscale} and \eqref{eq:1d-more-multiscale}: The performance of the predictions using different DNNs.}
\label{Table 1D multiscale function:}
\centering
\begin{adjustbox}{max width=\textwidth}
\begin{tabular}{c|c|c|c|c|c|c}
\hline 
\multirow{2}{*}{\makecell{DNN Architecture}} & \multicolumn{3}{c|}{Eq.~\eqref{eq:1d-multiscale}} & \multicolumn{3}{c}{Eq.~\eqref{eq:1d-more-multiscale}}\\
\cline{2-7}
& $\mathcal{L}(\bm{\theta})$ & $\left\|\epsilon\right\|_2$ & $T_{total}$ (min) & $\mathcal{L}(\bm{\theta})$ & $\left\|\epsilon\right\|_2$ & $T_{total}$ (min) \\
\hline  
Vanilla DNN & $5.01 \times 10^{-3}$ 
          & $9.94 \times 10^{-2}$ 
          & 2.41
          & $5.02 \times 10^{-3}$ 
          & $9.95 \times 10^{-2}$ 
          & 2.43
          \\
\hline 
MscaleDNN \cite{CiCP281970} & $3.04 \times 10^{-9}$ 
                         & $8.31 \times 10^{-5}$ 
                         & 21.15
                         & $2.86 \times 10^{-4}$ 
                         & $2.37 \times 10^{-2}$
                         & 21.59 \\
\hline 
MFF \cite{WANG2021113938} & $4.63 \times 10^{-8}$ 
                         & $2.92 \times 10^{-4}$ 
                         & 2.23
                         & $2.96 \times 10^{-3}$ 
                         & $7.56 \times 10^{-2}$
                         & 2.26\\
\hline
RepNN     & $9.75 \times 10^{-9}$ 
                         & $1.40 \times 10^{-4}$ 
                         & 2.61
                         & $6.29 \times 10^{-8}$ 
                         & $3.62 \times 10^{-4}$
                         & 2.68 \\
\hline
\end{tabular}
\end{adjustbox}
\end{table}

We proceed to consider a case with more pronounced multiscale features:
\begin{equation}\label{eq:1d-more-multiscale}
u(x) = \sin(2 \pi x) + 0.1 \sin(160\pi x), \quad x \in [-1, 1].
\end{equation}
We continue to use the four DNN models utilized in the above case to approximate this function Eq.~\eqref{eq:1d-more-multiscale}.

We present in Table~\ref{Table 1D multiscale function:} the results for the more challenging case Eq.~\eqref{eq:1d-more-multiscale}. The proposed RepNN maintains strong performance, reducing the training loss by approximately 5 orders of magnitude and the testing error by roughly 2 orders of magnitude compared to the vanilla DNN, consistent with the results obtained on Eq.~\eqref{eq:1d-multiscale}. In contrast, the performance of MscaleDNN and MFF degrades substantially on this more oscillatory function: the testing error of MscaleDNN increases to $2.37\times10^{-2}$, approximately 3 orders of magnitude larger than in the first case, while the MFF yields a testing error of $7.56\times10^{-2}$, comparable to that of the vanilla DNN. As reported in Refs.~\cite{WANG2021113938,HUANG2025117751}, the accuracy of MscaleDNN and MFF depends critically on the prescribed scaling mappings, indicating that appropriate hyperparameters must be carefully selected for each problem. By contrast, the RepNN adaptively tunes $W_1$ during training and therefore remains robust to changes in the frequency characteristics of the target function. Furthermore, the RepNN and MFF incur computational costs similar to the vanilla DNN, while the MscaleDNN is substantially more expensive.

We note that with carefully designed scaling in MscaleDNN or feature expansion in MFF, the two models can achieve similar testing errors as the RepNN, as shown in Table \ref{Table 1D multiscale function 2:} and will not be discussed in detail here.


\begin{figure}[!ht]
 \centering
 \includegraphics[width=1.0\textwidth]{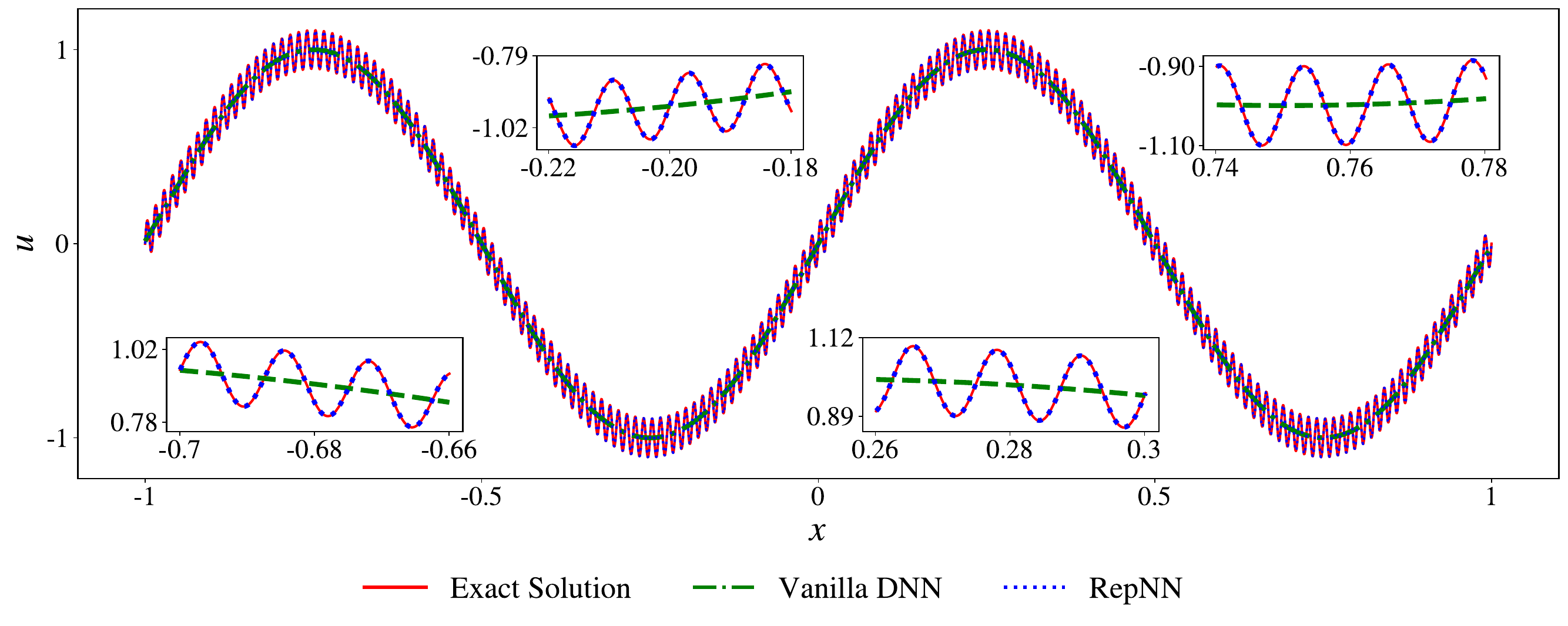} 
 \caption{1D multiscale function Eq.~\eqref{eq:1d-more-multiscale}: The exact solution and predicted solutions obtained from the vanilla DNN and RepNN.}
 \label{fig fit :one-dimensional multiscale function}
\end{figure}

\begin{table}[h]	
\setlength{\abovecaptionskip}{0cm}
		\setlength{\belowcaptionskip}{0.2cm}
\caption{1D multiscale function Eq.~\eqref{eq:1d-more-multiscale}: The performance of the predictions using MscaleDNN with scale coefficients of $\{1,2,4,8,16\}$, and MFF with the Fourier feature mapping $\gamma_x = 20.0$. Note that only the scaling mappings of these two DNNs are changed, while all other settings remain exactly the same as those in Table~\ref{Table 1D multiscale function:}.}
\label{Table 1D multiscale function 2:}
\centering
\begin{adjustbox}{max width=\textwidth}
\begin{tabular}{c|c|c|c|c|c}
\hline 
\multicolumn{3}{c|}{ MscaleDNN \cite{CiCP281970} } & \multicolumn{3}{c}{MFF \cite{WANG2021113938}}\\
\hline  
 $\mathcal{L}(\bm{\theta})$ & $\left\|\epsilon\right\|_2$ & $T_{total}$ (min) & $\mathcal{L}(\bm{\theta})$ & $\left\|\epsilon\right\|_2$ & $T_{total}$ (min) \\
\hline  
 $5.76 \times 10^{-7}$ 
& $9.70 \times 10^{-4}$ 
& 30.40
& $1.18 \times 10^{-7}$ 
& $4.82 \times 10^{-4}$
& 2.23 \\
\hline
\end{tabular}
\end{adjustbox}
\end{table}

\subsubsection{4D high-frequency function}
\label{section: 4D high-frequency function}
We proceed to consider the following 4D high-frequency function adopted from Ref.~\cite{CiCP281970}:
\begin{equation}\label{eq:4d frequency}
u(\bm{x}) = \sum_{j=1}^4 e^{-x_j^2}\sin(\mu_j x_j^2),  
\quad x_j \in [-1,1],
\end{equation}
where \(\bm{x} = (x_1,\ldots,x_4)\), and the parameters are given by \((\mu_1,\mu_2,\mu_3,\mu_4)=(90,100,110,120)\pi\).
As shown in Fig.~\ref{fig fit :4-dimensional function}, the target function is relatively smooth near \(\bm{x}=0\) and becomes increasingly oscillatory away from the origin, exhibiting typical high-frequency characteristics. Moreover, larger values of \(\mu_j\) lead to stronger oscillations.

We compare the vanilla DNN, the vanilla tensor neural network (TNN) \cite{TensorNN}, and the proposed RepNN in approximating this function Eq.~\eqref{eq:4d frequency}. In all test cases, the minibatch training strategy is employed, and the batch size is set to \(N_u=4096\). The training points are randomly sampled within the computational domain \([-1,1]^4\).

\begin{table}[!h]	
\setlength{\abovecaptionskip}{0cm}
		\setlength{\belowcaptionskip}{0.2cm}
\caption{4D high-frequency function Eq.~\eqref{eq:4d frequency}: The performance of the predictions obtained from the different DNNs.}
\label{Table 4D high function}
\centering
\begin{adjustbox}{max width=\textwidth}
\begin{tabular}{c|c|c}
\hline 
DNN Architecture & $\mathcal{L}(\bm{\theta})$ & $\left\|\epsilon\right\|_2$ \\
\hline 
Vanilla DNN & $6.65 \times 10^{-1}$ & $7.52 \times 10^{-1}$ \\
\hline 
Vanilla TNN \cite{TensorNN} & $9.86 \times 10^{-1}$ & $ 9.21 \times 10^{-1}$ \\
\hline
RepNN & $1.65 \times 10^{-5}$ & $3.76 \times 10^{-3}$ \\
\hline
\end{tabular}
\end{adjustbox}
\end{table}

We summarize in Table~\ref{Table 4D high function} the results obtained by different models. As observed, the RepNN reduces the training loss by approximately 4 orders of magnitude and achieves a testing error roughly 2 orders of magnitude smaller than the vanilla DNN and the vanilla tensor DNN. Fig.~\ref{fig fit :4-dimensional function} shows the predictions on the representative slices $x_2=x_3=x_4=0$ and $x_3=x_4=0$. The RepNN produces accurate predictions on both slices, with maximum absolute errors remaining below \(0.008\) and \(0.017\), respectively, demonstrating that the RepNN can effectively resolve high-frequency components in high-dimensional problems. In contrast, both the vanilla DNN and the vanilla tensor DNN fail to capture the high-frequency oscillatory features of Eq.~\eqref{eq:4d frequency}, resulting in large training losses and testing errors.


\begin{figure}[!ht]
 \centering
 \includegraphics[width=1.0\textwidth]{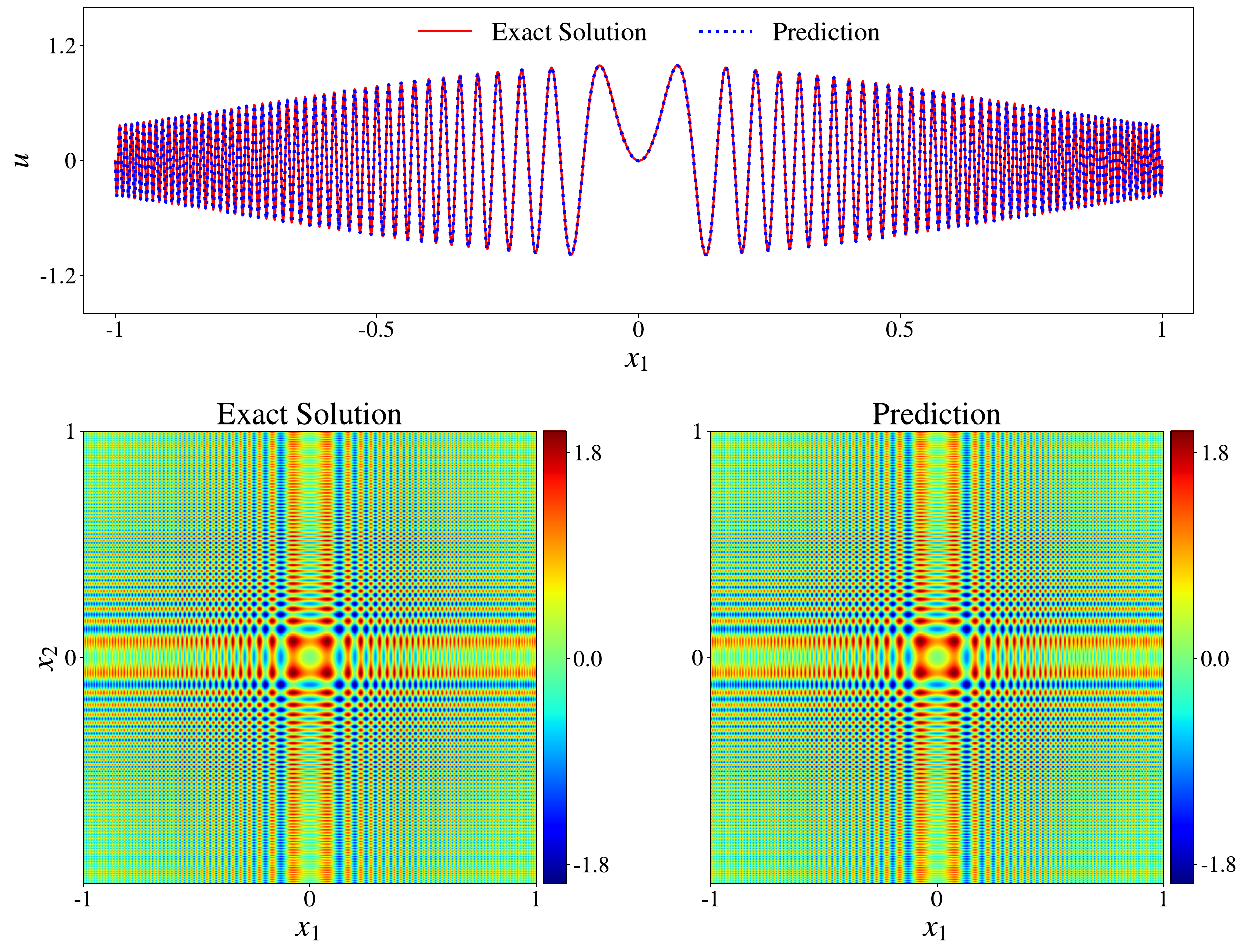} 
 \caption{4D high-frequency function Eq.~\eqref{eq:4d frequency}: The top panel shows the exact solution and the prediction using the RepNN on the slice $x_2 = x_3 = x_4 = 0$; the bottom panel shows the exact solution and the prediction on the slice $x_3 = x_4 = 0$.}
 \label{fig fit :4-dimensional function}
\end{figure}

\subsection{Solving forward and inverse PDE problems using PINNs}\label{section: PDE problem}

\subsubsection{Forward Klein-Gordon equation}\label{section: Klein-Gordon equation}

The Klein-Gordon equation serves as the relativistic counterpart to the Schr\"{o}dinger equation and has attracted considerable attention in both analytical and numerical studies \cite{MA202641}.
The particular problem we consider here is the following one-dimensional nonlinear Klein-Gordon equation:
\begin{equation}\label{PDE: KG}
\begin{cases}
u_{tt} + \alpha u_{xx}+\beta u +\gamma u^k=f(x,t), & (x,t)\in \Omega \times (0,T),\\
u(x,0) = \cos(4 \pi x) + 0.1x(1-x), & x\in \Omega ,\\
u_t(x,0)=0, & x\in \Omega,\\
u(x,t)=h(x,t), & (x,t)\in \partial \Omega \times [0,T] .\\
\end{cases}
\end{equation}
Following Refs.~\cite{doi1043,WANG2024113112}, we set \(\Omega=[0,1]\), \(T=1\), \(\alpha=-1\), \(\beta=0\), \(\gamma=1\), and \(k=3\). We assume that the solution $u$ to the above equation is expressed as:
\begin{equation}\label{eq:kg exact solution}
u(x,t)=\cos(4 \pi x) \cos (180\pi t)+0.1x(1-x)\cos(2\pi t),
\end{equation}
from which the source term $f(x,t)$ and the corresponding initial/boundary conditions are derived analytically based on Eq.~\eqref{PDE: KG}.

As can be seen from Eq.~\eqref{eq:kg exact solution} and Fig.~\ref{fig PDE :KG exact}, the solution exhibits pronounced multiscale characteristics. In particular, it oscillates much more rapidly in the temporal direction than in the spatial direction, making it a challenging benchmark for evaluating the capability of neural networks to capture anisotropic high-frequency and multiscale features. To address such challenges, most existing approaches mitigate spectral bias by employing direction-dependent frequency embeddings \cite{WANG2024113112,ren2025rmed}. Although effective, these methods typically require problem-specific choices of frequency scales and embedding parameters, which increase the difficulty of hyperparameter tuning and may limit their applicability when prior knowledge of the underlying frequency content is unavailable.

\begin{figure}[!ht]
 \centering
 \includegraphics[width=1.0\textwidth]{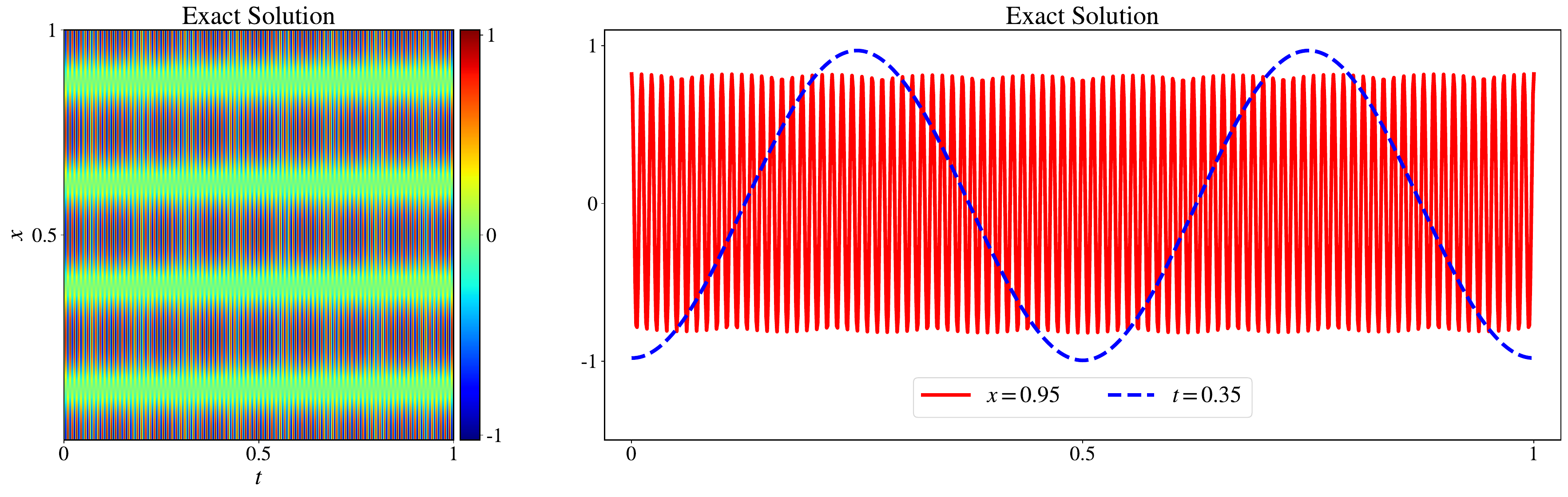} 
 \caption{PINNs for Klein-Gordon equation: The exact solution of Eq.~\eqref{PDE: KG}.}
 \label{fig PDE :KG exact}
\end{figure}

\begin{table}[h]	
\setlength{\abovecaptionskip}{0cm}
		\setlength{\belowcaptionskip}{0.2cm}
\caption{Klein--Gordon equation Eq.~\eqref{PDE: KG}: The performance of PINN methods with different DNN architectures for solving Eq.~\eqref{PDE: KG}. In MFF, \(\gamma_x\) and \(\gamma_t\) denote the embedding frequencies in space and time, respectively.}
\label{Table Klein-Gordon equation}
\centering
\begin{adjustbox}{max width=\textwidth}
\begin{tabular}{c|c|c}
\hline 
DNN Architecture & $\left\|\epsilon\right\|_2$ & $T_{total}$ (min) \\
\hline 
Vanilla DNN & $ 7.67\times 10^{0}$ & 25.55\\
\hline
MFF$^1$ \cite{WANG2021113938} ($\gamma_x = 1$, $\gamma_t = 1$) & $ 1.19 \times 10^{0}$ & 31.26\\
\hline
MFF$^2$ \cite{WANG2021113938} ($\gamma_x = 1$, $\gamma_t = 180 $) & $9.27 \times 10^{-2}$& 31.00\\
\hline
MFF$^3$ \cite{WANG2021113938} ($\gamma_x = 300$, $\gamma_t = 300 $) & $ 1.61 \times 10^{-1}$ & 32.03 \\
\hline
RepNN & $ 4.08  \times 10^{-2}$ & 39.14 \\
\hline
\end{tabular}
\end{adjustbox}
\end{table}

\begin{figure}[!ht]
 \centering
 \includegraphics[width=1.0\textwidth]{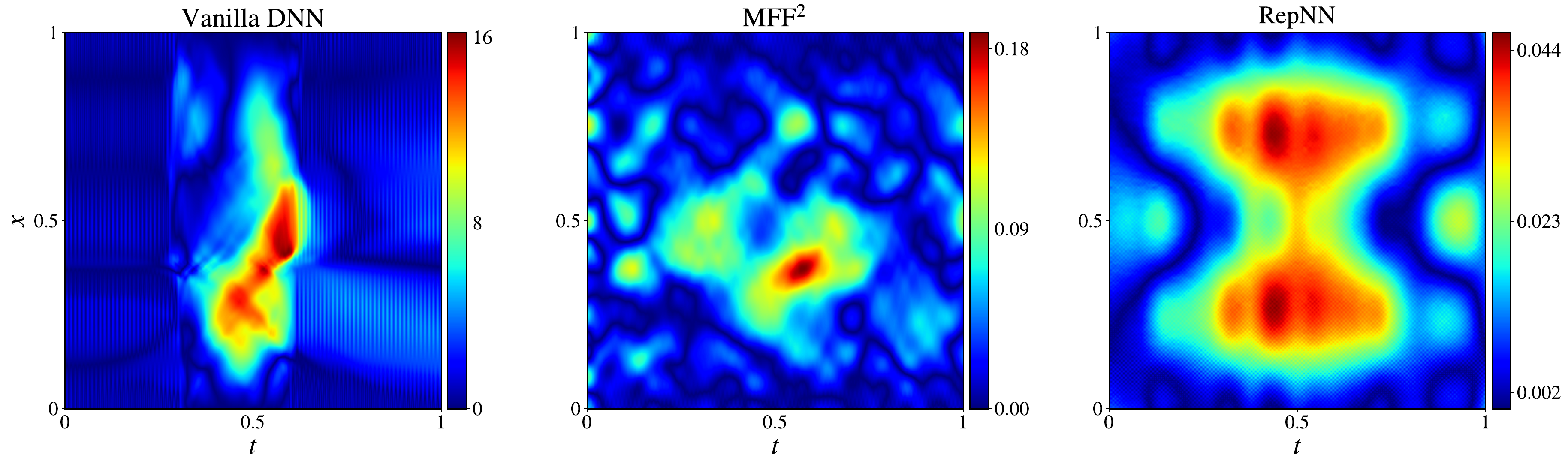} 
 \caption{Klein--Gordon equation Eq.~\eqref{PDE: KG}: The point-wise absolute errors of PINN methods with different DNN architectures for solving Eq.~\eqref{PDE: KG}.}
 \label{fig PDE :KG error}
\end{figure}

We present in Table~\ref{Table Klein-Gordon equation} the results obtained by PINNs with different DNN architectures for solving Eq.~\eqref{PDE: KG}. As observed, the proposed RepNN (1) achieves the smallest relative \(L_2\) error of \(4.08\times 10^{-2}\) among all compared methods, and (2) reduces the testing error by approximately 2 orders of magnitude compared to the vanilla DNN. Fig.~\ref{fig PDE :KG error} further shows that the point-wise absolute errors obtained by the RepNN are significantly smaller than those of the vanilla DNN and the MFF. These results indicate that the RepNN is highly effective in resolving the anisotropic multiscale solution.

For the MFF, the prediction accuracy is strongly affected by the prescribed embedding frequencies. The relative \(L_2\) errors obtained with \(\gamma_x=\gamma_t=1\), \(\gamma_x=1\) and \(\gamma_t=180\), and \(\gamma_x=\gamma_t=300\) are \(1.19\times 10^{0}\), \(9.27\times 10^{-2}\), and \(1.61\times 10^{-1}\), respectively, suggesting that embedding frequencies must be carefully selected for the MFF. In contrast, the proposed RepNN starts from the general initialization in Eq.~\eqref{numerical wb} and automatically adapts the parameters during training. Consequently, the RepNN achieves improved accuracy without introducing substantial additional computational overhead.


\subsubsection{Forward Helmholtz equation}\label{section: Helmholtz equation}


The Helmholtz equation is a fundamental elliptic PDE that finds extensive applications in acoustics, electromagnetics, and seismic imaging \cite{101093gjiggac399,yang2026adtworks}. In this section, we consider the following 2D Helmholtz equation:
\begin{equation}
\begin{cases}\label{PDE hel}
 \Delta u(x,y) + u(x,y)=f(x,y),& (x,y)\in \Omega = (-1,1)\times(-1,1),\\
 u(x,y)=0,& (x,y)\in \partial \Omega.
\end{cases}
\end{equation}
The fabricated solution is given by
\begin{equation}\label{hel exact}
 u(x,y) = \sin(50 \pi x) \sin(50 \pi y).
\end{equation}
The corresponding forcing term $f(x,y)$ is derived from Eq.~\eqref{hel exact}.

As shown in Eq.~\eqref{hel exact}, the solution contains high-frequency oscillations in both spatial directions. Due to the spectral bias, obtaining accurate predictions with vanilla PINNs is challenging~\cite{zhltz}. In this example, we compare PINNs equipped with different DNNs, including the vanilla DNN, SIREN~\cite{sitzmann2020implicit}, MFF~\cite{WANG2021113938}, and the proposed RepNN.

\begin{table}[h]	
\setlength{\abovecaptionskip}{0cm}
		\setlength{\belowcaptionskip}{0.2cm}
\caption{Helmholtz equation Eq.~\eqref{PDE hel}: The performance of PINN methods with different DNN architectures for solving Eq.~\eqref{PDE hel}. 
SIREN~\cite{sitzmann2020implicit} uses the sine as a periodic activation function, where $w_0$ is the frequency scaling factor applied to the first sinusoidal layer. In MFF, \(\gamma_x\) and \(\gamma_y\) denote the embedding frequencies along the spatial coordinates \(x\) and \(y\), respectively.}
\label{Table Table Helmholtz equation}
\centering
\begin{adjustbox}{max width=\textwidth}
\begin{tabular}{c|c}
\hline 
 DNN Architecture & $\left\|\epsilon\right\|_2$ \\
\hline 
Vanilla DNN & $ 1.00 \times 10^{0}$ \\
\hline
SIREN \cite{sitzmann2020implicit} ($w_0=50$) & $ 4.25 \times 10^{-1}$ \\
\hline
MFF \cite{WANG2021113938} ($\gamma_x = 50$, $\gamma_y = 50 $) & $2.70 \times 10^{-4}$ \\
\hline
RepNN  & $ 4.18\times 10^{-5}$ \\
\hline
\end{tabular}
\end{adjustbox}
\end{table}

We present in Table~\ref{Table Table Helmholtz equation} the results obtained by PINNs with different DNNs for solving Eq.~\eqref{PDE hel}. As observed, the proposed RepNN achieves the smallest relative \(L_2\) error of \(4.18 \times 10^{-5}\) among all compared DNNs. Compared to the vanilla DNN, the RepNN reduces the testing error by approximately 4 orders of magnitude. Fig.~\ref{fig PDE :Helmholtz} further shows that the point-wise absolute error obtained by the RepNN remains small over the computational domain, with the maximum prediction error less than \(1.23\times 10^{-4}\). The results confirm that the RepNN can effectively capture the high-frequency components of the Helmholtz solution.

In contrast, the vanilla DNN fails to accurately approximate the high-frequency solution, leading to a relative \(L_2\) error of \(1.00\times 10^{0}\). The SIREN reduces the error to \(4.25\times 10^{-1}\), while the MFF with \(\gamma_x=\gamma_y=50\) further reduces the error to \(2.70\times 10^{-4}\). Nevertheless, the proposed RepNN achieves the best accuracy among all tested architectures.

\begin{figure}[!ht]
 \centering
 \includegraphics[width=1.0\textwidth]{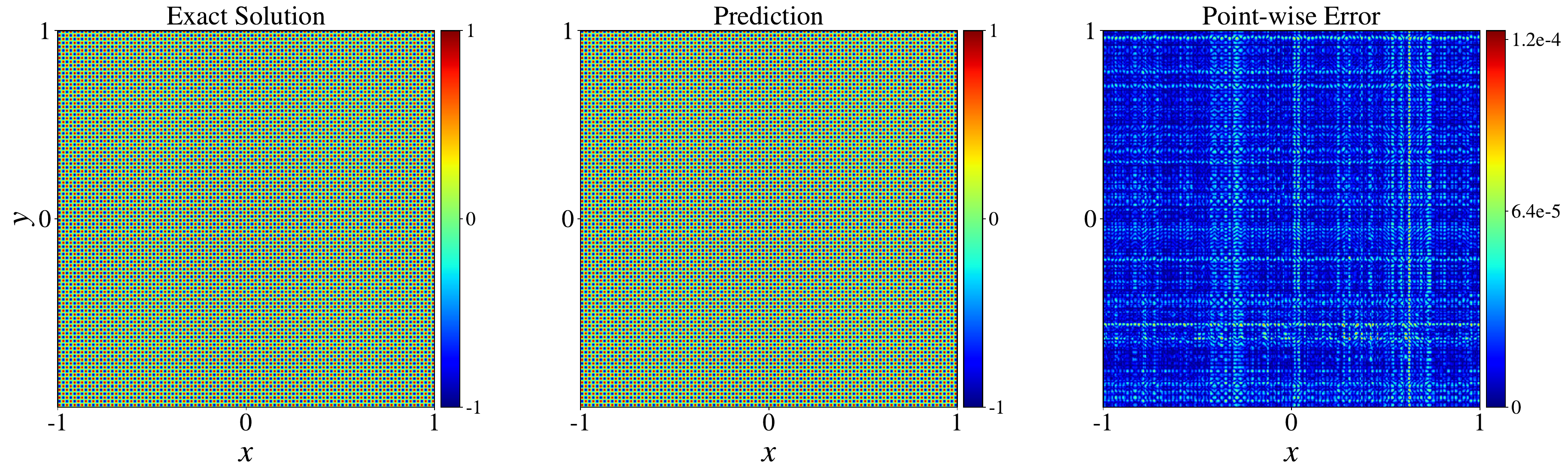} 
 \caption{Helmholtz equation Eq.~\eqref{PDE hel}: The exact solution of Eq.~\eqref{PDE hel} and the prediction and the point-wise absolute error of PINNs using the RepNN.}
 \label{fig PDE :Helmholtz}
\end{figure}

\subsubsection{Inverse Gray--Scott equation}\label{section: Gray--Scott equation}

The Gray--Scott model is a prototypical reaction-diffusion system that governs the spatiotemporal evolution of two chemical species and is widely employed to simulate pattern formation and related phenomena. Accurate identification of unknown parameters is important for understanding the underlying dynamics and enhancing the reliability of the model. Here, we consider a 2D Gray--Scott model \cite{WANG2021113938} with periodic boundary conditions on the domain $\Omega = (-1,1)^2$:
\begin{equation}\label{PDE GS MODEL}
\begin{cases}
\displaystyle
\frac{\partial u}{\partial t} = D_u \Delta u - u v^2 + F(1-u), 
& (x,y,t)\in\Omega\times(0,T], \\[6pt]

\displaystyle
\frac{\partial v}{\partial t} = D_v \Delta v + u v^2 - (F+k)v, 
& (x,y,t)\in\Omega\times(0,T], \\[6pt]

u(x,y,0)=1-\exp\!\left(-80\bigl((x+0.05)^2+(y+0.02)^2\bigr)\right), 
& (x,y) \in \Omega,\\[6pt]

v(x,y,0)=\exp\!\left(-80\bigl((x-0.05)^2+(y-0.02)^2\bigr)\right),   
& (x,y) \in \Omega,
\end{cases}
\end{equation}
where $u(x,y,t)$ and $v(x,y,t)$ represent the concentrations of the two chemical species, $F$ is the constant supply of species $u$, and $k$ is the decay rate of species $v$, and $D_u$ and $D_v$ denote the corresponding diffusion coefficients.

For this model, we set \(F=0.04\), \(k=0.06\), \(D_u=2 \times 10^{-5}\), and \(D_v=1 \times 10^{-5}\). Eq.~\eqref{PDE GS MODEL} is integrated up to the final time \(T=4000\) using the Chebfun package \cite{driscoll2014chebfun}. Specifically, the training data \(\{(x^i,y^i,t^i),(u^i,v^i)\}_{i=1}^N\) are obtained by sampling the reference solution over the time interval \([3500,4000]\) with a temporal step size of \(\Delta t=10\). All numerical settings and parameter choices described above are consistent with those reported in Ref.~\cite{WANG2021113938}.

In this example, we are interested in identifying the unknown physical parameter values: the rates \(F\) and \(k\), and the diffusion coefficients \(D_u\) and \(D_v\). We adopt the RepNN with two neurons in the output layer, yielding the approximations \(u_{\bm{\theta}}(x,y,t)\) and \(v_{\bm{\theta}}(x,y,t)\) for the concentration fields \(u\) and \(v\), respectively.

The trainable rate parameters \(F\) and \(k\) are both initialized to \(0.5\). Furthermore, to strictly enforce the positivity of the diffusion coefficients, we set \(D_u=\exp(\epsilon_u)\) and \(D_v=\exp(\epsilon_v)\), where \(\epsilon_u\) and \(\epsilon_v\) are initialized to \(-10\), following the setting in Ref.~\cite{WANG2021113938}. We use  Adam to minimize the loss function for \(200{,}000\) iterations. The trainable variables include the network parameters \(\bm{\theta}\), the reaction rates \(F\) and \(k\), and the auxiliary variables \(\epsilon_u\) and \(\epsilon_v\).

\begin{table}[h]	
\setlength{\abovecaptionskip}{0cm}
		\setlength{\belowcaptionskip}{0.2cm}
\caption{Gray--Scott equation Eq.~\eqref{PDE GS MODEL}: The performance of PINN methods with the vanilla DNN and the RepNN for solving Eq.~\eqref{PDE GS MODEL} over the time interval $[3500,4000]$ with temporal step size $\Delta t = 10$.}
\label{Table GS equation 2}
\centering
\begin{adjustbox}{max width=\textwidth}
\begin{tabular}{c|c|c}
\hline 
DNN Architecture & $\left\|\epsilon_u\right\|_2$ & $\left\|\epsilon_v\right\|_2$ \\
\hline 
Vanilla DNN & $ 2.05 \times 10^{-1}$ & $ 4.89 \times 10^{-1}$ \\
\hline 
RepNN & $ 2.38 \times 10^{-3}$ & $ 1.34 \times 10^{-2}$ \\
\hline
\end{tabular}
\end{adjustbox}
\end{table}

\begin{figure}[!ht]
 \centering
 \includegraphics[width=1.0\textwidth]{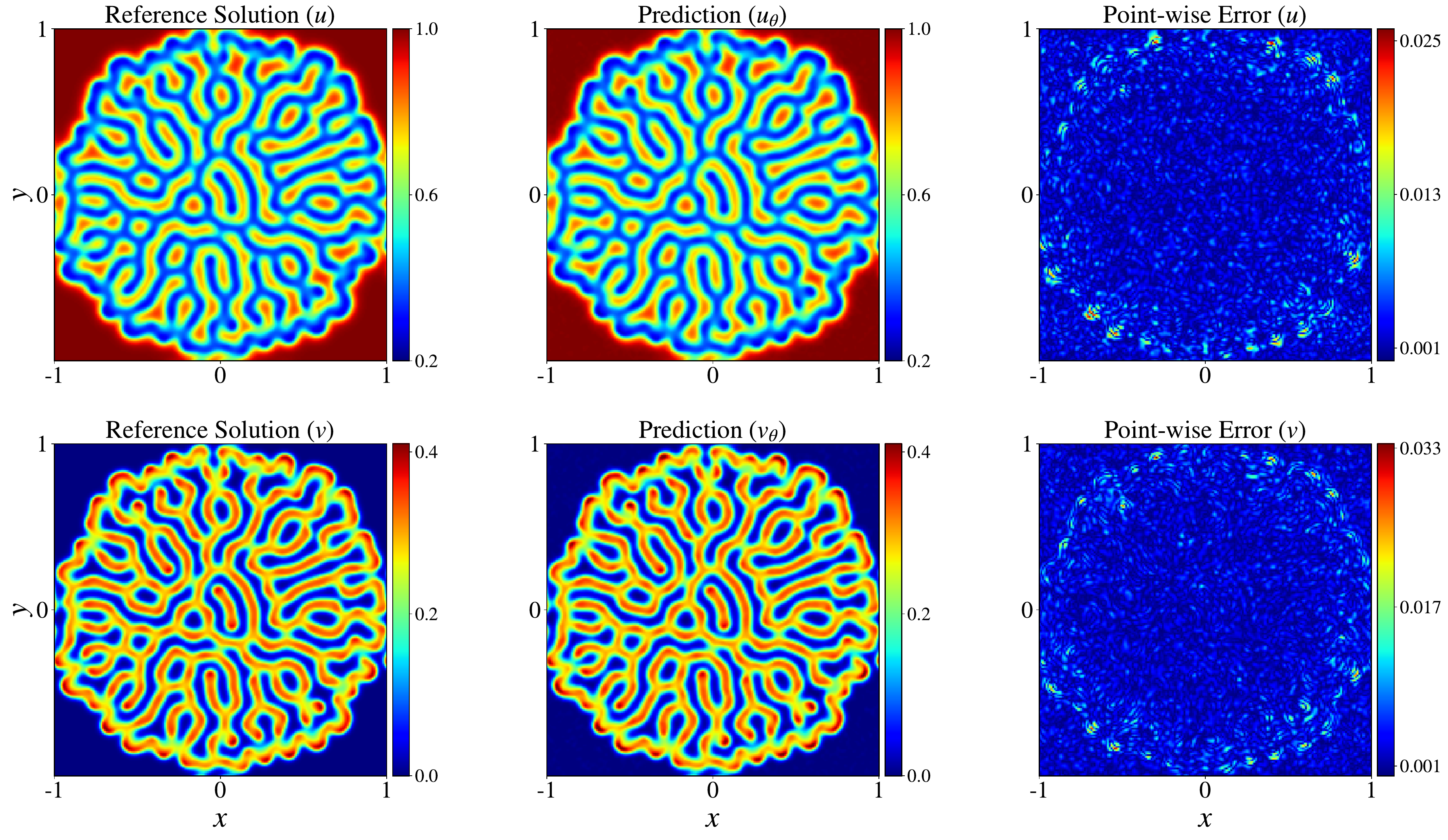} 
 \caption{Gray--Scott equation Eq.~\eqref{PDE GS MODEL}: Reference solutions of Eq.~\eqref{PDE GS MODEL}, together with the predictions and point-wise absolute errors of the PINNs using RepNN for the concentration fields $u$ and $v$, respectively ($T=4000$).}
 \label{fig PDE :GS}
\end{figure}

We present in Tables~\ref{Table GS equation 2} and~\ref{Table GS diff} the results obtained by PINNs with different DNN architectures for solving Eq.~\eqref{PDE GS MODEL}. As observed, the proposed RepNN achieves relative \(L_2\) errors of \(2.38\times 10^{-3}\) and \(1.34\times 10^{-2}\) for the concentration fields \(u\) and \(v\), respectively. Compared to the vanilla DNN, the RepNN reduces the testing error by approximately 2 orders of magnitude for \(u\) and more than 1 order of magnitude for \(v\). Fig.~\ref{fig PDE :GS} further shows that the predictions obtained by the RepNN agree well with the reference solutions at \(T=4000\). The corresponding point-wise absolute errors remain below \(0.03\) and \(0.04\) for \(u\) and \(v\), respectively.

For the inverse problem, the RepNN also accurately identifies the unknown physical parameters. As shown in Table~\ref{Table GS diff} and Fig.~\ref{fig PDE :GS_Parameters}, the identified values of \(F\), \(k\), \(D_u\), and \(D_v\) are \(4.00\times 10^{-2}\), \(5.99\times 10^{-2}\), \(1.96\times 10^{-5}\), and \(9.95\times 10^{-6}\), respectively, which are in excellent agreement with the exact values. In contrast, the vanilla DNN gives much less accurate parameter estimates. In particular, the identified value of \(D_v\) is more than twice the exact value.

Compared with the MFF results reported in Ref.~\cite{WANG2021113938}, the proposed RepNN gives more accurate estimates for the diffusion coefficients. The absolute errors of \(D_u\) and \(D_v\) are reduced by about \(1.25\) and \(6\) times, respectively. Moreover, the RepNN also accurately recovers the reaction rates \(F\) and \(k\). These results verify the effectiveness of RepNN in identifying unknown physical parameters.


\begin{table}[h]	
\setlength{\abovecaptionskip}{0cm}
		\setlength{\belowcaptionskip}{0.2cm}
\caption{Gray--Scott equation Eq.~\eqref{PDE GS MODEL}: The approximate values of the identified physical parameters obtained with different DNN architectures. The exact values are \(F=0.04\), \(k=0.06\), \(D_u=2 \times 10^{-5}\), and \(D_v=1 \times 10^{-5}\).}
\label{Table GS diff}
\centering
\begin{adjustbox}{max width=\textwidth}
\begin{tabular}{c|c|c|c|c}
\hline 
DNN Architecture & $F$ value & $k$ value & $D_u$ value & $D_v$ value \\
\hline 
Vanilla DNN & $ 3.41 \times 10^{-2}$ & $ 5.78 \times 10^{-2}$ & $ 2.13 \times 10^{-5}$ & $ 2.30 \times 10^{-5}$ \\
\hline 
MFF \cite{WANG2021113938} & -- & -- & $ 1.95\times 10^{-5}$ & $ 9.70 \times 10^{-6}$ \\
\hline 
RepNN & $ 4.00 \times 10^{-2}$ & $ 5.99 \times 10^{-2}$ & $ 1.96 \times 10^{-5}$ & $ 9.95 \times 10^{-6}$ \\
\hline 
\end{tabular}
\end{adjustbox}
\end{table}

\begin{figure}[!ht]
 \centering
 \includegraphics[width=1.0\textwidth]{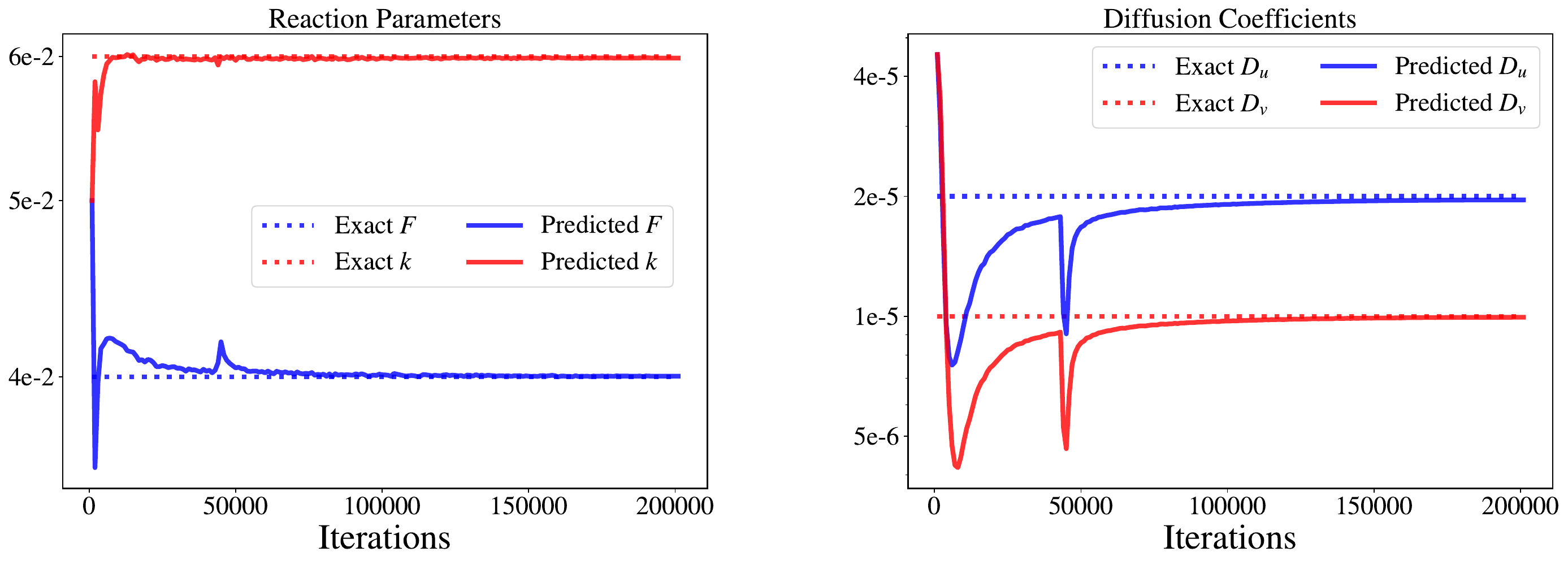} 
 \caption{Gray--Scott equation Eq.~\eqref{PDE GS MODEL}: Evolution of the inferred reaction parameters $F$ and $k$, and the diffusion coefficients $D_u$ and $D_v$, identified by the RepNN.}
 \label{fig PDE :GS_Parameters}
\end{figure}

\subsection{Earthquake problem using neural operator}\label{section: Earthquake problem}
The final numerical example is designed to show the effectiveness of RepNN in enhancing neural operators.

We consider the prediction of the dynamic response of a multi-story building subjected to seismic loading. The governing equation of motion, adopted from Refs.~\cite{liCausalityear,khodakarami2026spectral}, is given by
\begin{equation}\label{PDE Earthquake}
\mathbf{M}\ddot{\bm{x}} + \mathbf{C}\dot{\bm{x}} + \mathbf{K}\bm{x} = \mathbf{M}\boldsymbol{\iota}\ddot{u}_g(t),
\end{equation}
where \(\mathbf{M}\), \(\mathbf{C}\), and \(\mathbf{K}\) denote the mass, damping, and stiffness matrices obtained from finite element discretization, respectively; $\bm{x}(t)$ is the displacement vector representing the response of the top floor of the building; $\boldsymbol{\iota}$ is the influence vector that distributes the ground acceleration to the degrees of freedom; and $\ddot{u}_g(t)$ represents the ground acceleration induced by the earthquake.

In this study, the dataset consists of 100 earthquake ground motion records sourced from the Pacific Earthquake Engineering Research Center database (\url{https://peer.berkeley.edu/}). Each record contains the time history of the structural displacement vector $\bm{x}(t)$ and the corresponding ground acceleration $\ddot{u}_g(t)$. Records with time steps $\delta t < 0.02$ sec are first processed using a Butterworth filter with frequency (0.1-24.9) Hz and then uniformly resampled to $\delta t = 0.02$ sec. All resulting records span 50 seconds at a sampling frequency of 50 Hz. 
Fig.~\ref{fig eqrth :exact} shows representative examples of the displacement vector \(\bm{x}(t)\) and the associated ground acceleration \(\ddot{u}_g(t)\) from the dataset.

\begin{figure}[!ht]
 \centering
 \includegraphics[width=1.0\textwidth]{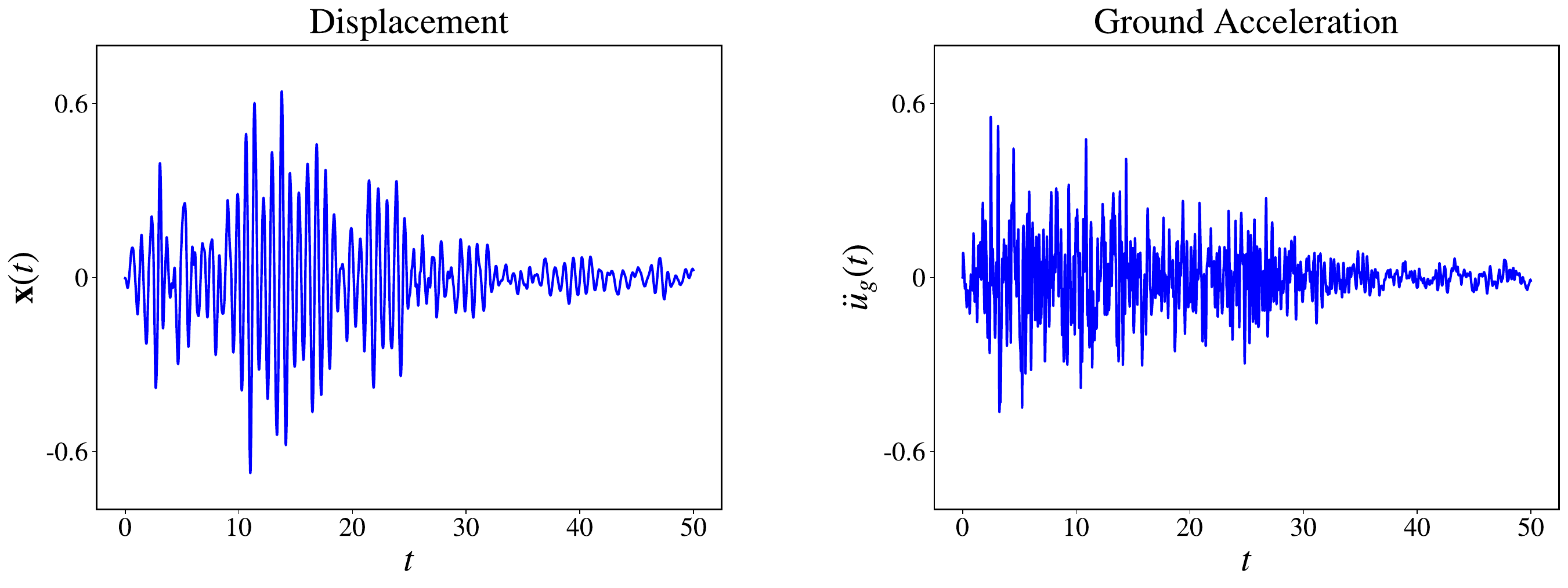} 
\caption{Earthquake problem Eq.~\eqref{PDE Earthquake}: The time histories of the displacement vector $\bm{x}(t)$ and the associated ground acceleration $\ddot{u}_g(t)$ sampled from the dataset.}
 \label{fig eqrth :exact}
\end{figure}

The objective of this example is to learn the operator that maps the ground acceleration \(\ddot{u}_g(t)\) to the corresponding displacement vector \(\bm{x}(t)\):
\begin{equation}\label{Earthquake operator}
\mathcal{G} : \ddot{u}_g(t) \mapsto \bm{x}(t).
\end{equation}
This is a challenging multiscale operator learning task, since the structural response contains both low-frequency global vibration modes and high-frequency components induced by impulsive ground motions~\cite{khodakarami2026spectral}, as shown in Fig.~\ref{fig eqrth :exact}.

We apply DeepONet~\cite{lu2021learning} to learn the target operator and consider three methods: vanilla DeepONet, improved DeepONet, and improved causality-DeepONet. The vanilla DeepONet adopts the vanilla DNN as its trunk network, whereas the improved DeepONet and improved causality-DeepONet use the RepNN as their trunk networks. The improved causality-DeepONet further employs the causal windowing approach  to enforce temporal causality following \cite{liCausalityear}. For all branch and trunk networks, the number of neurons in the output layer is set to 1024.
We use  Adam to minimize the loss function for \(200{,}000\) iterations. In our experiments, the dataset is split into 90 samples for training and 10 samples for testing.

\begin{table}[h]	
\setlength{\abovecaptionskip}{0cm}
		\setlength{\belowcaptionskip}{0.2cm}
\caption{Earthquake problem Eq.~\eqref{PDE Earthquake}: The performance of different neural operators for learning the target operator \eqref{Earthquake operator}. For the Fourier Neural Operator (FNO), we employ four Fourier layers, each with 32 channels and 300 Fourier modes. All other experimental settings follow those used for DeepONet. Notably, FNO and DeepONet share the same order of magnitude in terms of trainable parameters, comprising 2,466,273 and 1,909,148 parameters, respectively.}
\label{Table Earthquake}
\centering
\begin{adjustbox}{max width=\textwidth}
\begin{tabular}{c|c|c}
\hline 
Method & $\left\|\epsilon_{train}\right\|_2$ & $\left\|\epsilon_{test}\right\|_2$ \\
\hline 
Vanilla DeepONet & $ 1.00\times 10^{0}$ & $ 1.00\times 10^{0}$\\
\hline 
Improved DeepONet & $ (9.51 \pm 0.76) \times 10^{-2}$ &  
$ (7.01 \pm 0.77) \times 10^{-1}$ \\
\hline 
Improved Causality-DeepONet & $ (1.27 \pm 0.13) \times 10^{-2}$ &  
$ (1.49 \pm 0.15) \times 10^{-2}$ \\
\hline 
Fourier Neural Operator \cite{li2021ftric} & $ (4.32 \pm 1.35) \times 10^{-3}$ & $ (2.77 \pm 1.20) \times 10^{-2}$ \\
\hline 
Fourier Neural Operator \cite{li2021ftric} (SOAP \cite{ICLR2025_e9886640}) & $ (4.26 \pm 1.39) \times 10^{-3}$ & $ (2.47 \pm 0.75) \times 10^{-2}$ \\
\hline 
Improved Causality-DeepONet (SOAP \cite{ICLR2025_e9886640}) & $ (7.05 \pm 0.63) \times 10^{-4}$ &  
$ (1.54 \pm 0.15) \times 10^{-3}$ \\
\hline
\end{tabular}
\end{adjustbox}
\end{table}

\begin{figure}[!ht]
 \centering
 \includegraphics[width=1.0\textwidth]{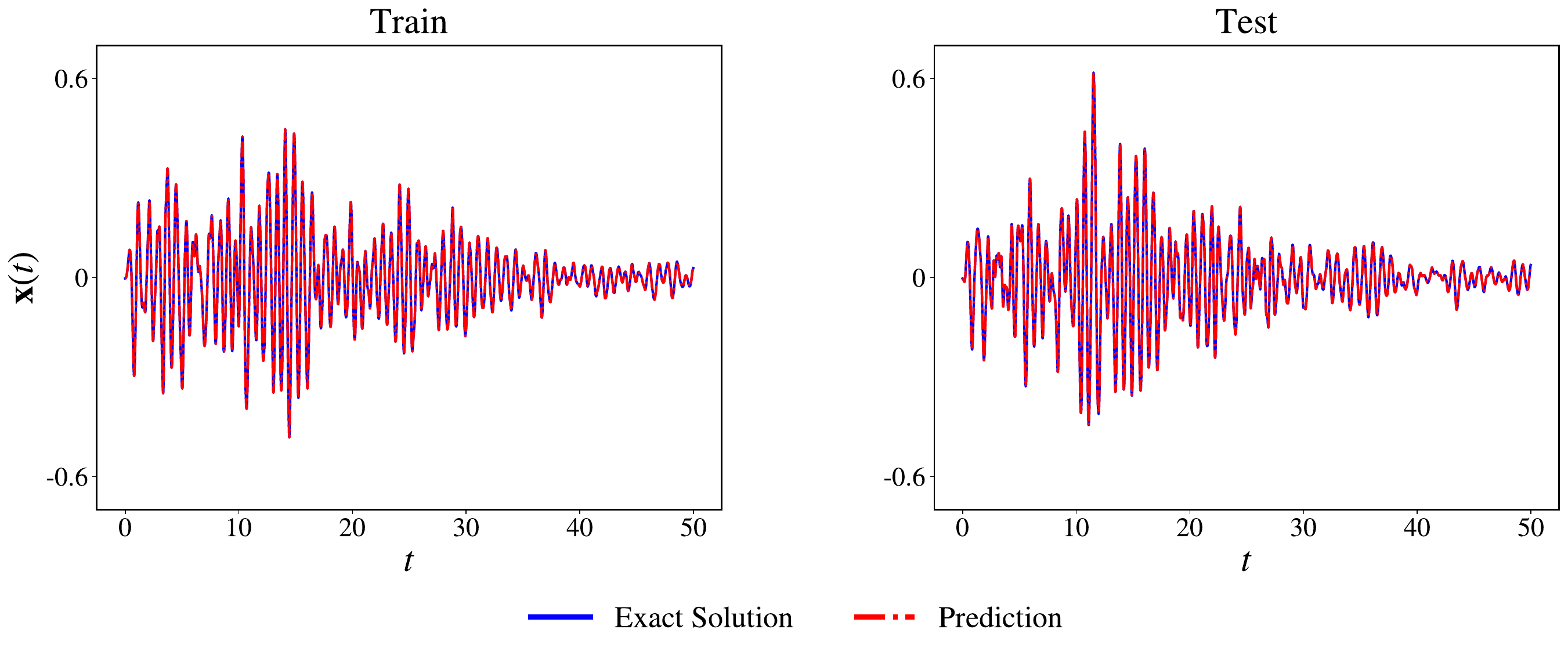} 
 \caption{Earthquake problem Eq.~\eqref{PDE Earthquake}: Exact displacements \(\bm{x}(t)\) and the corresponding predictions of the improved causality-DeepONet (SOAP) for training and test samples in the worst case.}
 \label{fig eqrth :pre}
\end{figure}

Following the definition of the relative \(L_2\) error in Eq.~\eqref{Numerical_examples_L2norm}, we report in Table~\ref{Table Earthquake} the mean and standard deviation of the relative \(L^2\) errors obtained by all methods on the training and test datasets. As observed, the improved DeepONet reduces the training error to \((9.51 \pm 0.76)\times 10^{-2}\), which is about 1 order of magnitude smaller than that of the vanilla DeepONet. However, its testing error remains at \((7.01 \pm 0.77)\times 10^{-1}\). In contrast, the improved causality-DeepONet further reduces the training and testing errors to \((1.27 \pm 0.13)\times 10^{-2}\) and \((1.49 \pm 0.15)\times 10^{-2}\), respectively, thereby showing good generalization capability on the unseen test set.

Moreover, as observed in ~\cite{khodakarami2026spectral}, higher-order optimizers can effectively mitigate spectral bias in DeepONet. Therefore, we further train the improved causality-DeepONet using the quasi second-order optimizer SOAP~\cite{ICLR2025_e9886640}, while keeping all other experimental settings unchanged. As shown in Table~\ref{Table Earthquake}, SOAP reduces the testing error of the improved causality-DeepONet from \((1.49 \pm 0.15)\times 10^{-2}\) to \((1.54 \pm 0.15)\times 10^{-3}\), leading to nearly one order of magnitude improvement. By contrast, replacing Adam with SOAP yields only marginal improvement for the FNO, reducing the testing error from \((2.77 \pm 1.20)\times 10^{-2}\) to \((2.47 \pm 0.75)\times 10^{-2}\). In addition, the improved causality-DeepONet with SOAP achieves a testing error more than one order of magnitude smaller than that of the FNO with SOAP. 
Fig.~\ref{fig eqrth :pre} further illustrates the worst-case predictions of the improved causality-DeepONet with SOAP over the dataset. The predicted displacements \(\bm{x}(t)\) remain in close agreement with the exact solutions, despite their highly oscillatory behavior. The maximum prediction errors are less than \(4.55 \times 10^{-4}\) and \(8.78 \times 10^{-4}\) for the training and test samples, respectively.
These results demonstrate the effectiveness of the RepNN in enhancing neural operators for real-world multiscale operator learning problems.

\section{Conclusion}\label{section: Conclusion}

In this paper, we have investigated recent efforts to overcome the spectral bias of DNNs and proposed RepNN, a reparameterized NN model for high-frequency and multiscale problems. The key idea is to reparameterize the weights and biases in the first hidden layer, enabling effective control of the initial slope scale and appropriate partitioning of the computational domain.

We first introduced a reparameterized ReLU NN and extended it to more general activation functions such as \(\tanh\) for PDE-related problems, and further developed a reparameterized tensor DNN for high-dimensional settings. By treating the reparameterized weights and biases as trainable parameters, RepNN provides an adaptive frequency scaling mechanism during training. We derived quantitative slope magnitude estimates and clarified the initialization needed to achieve the desired scaling. We also provided a theoretical perspective on why the method succeeds from a gradient-dynamics view, which shows that \(\tanh\) saturation naturally constrains gradient magnitudes at initialization, preventing explosion while sparse active pathways maintain trainability. 


Extensive numerical experiments - including function approximation, forward and inverse PDE problems, and operator learning - validate the effectiveness and robustness of RepNN, demonstrating that it provides a principled, lightweight, and flexible approach for applying DNNs to multiscale problems.

For future work, RepNN can be employed to solve more realistic high-dimensional multiscale PDEs, such as the Boltzmann transport equation~\cite{LIN2025114364}. In parallel, the impact of optimization dynamics and physics-based loss formulations on RepNN merits further investigation. Finally, developing localized feature embeddings to complement the global reparameterization of the first hidden layer is an important direction for capturing spatially varying multiscale structures.

\section*{Acknowledgements}
Y.~W.~and X.~M.~acknowledge the support of the National Natural Science Foundation of China (No. 12201229). X.~M.~also acknowledges the support of the Xiaomi Young Talents Program.

\appendix
\section{Proof of Theorem \ref{theorem1}}\label{Appendix: Proof}
\begin{proof}
By linearity of expectation,
\[
\mathbb{E}[S_m] = 0.
\]
Moreover, since \(\mathbb{E}[S_m]=0\), we have
\[
\mathrm{Var}(S_m)
=
\mathbb{E}[S_m^2].
\]
Now,
\[
S_m^2
=
\left(\sum_{i=1}^m Y_i\right)^2
=
\sum_{i=1}^m Y_i^2
+
2\sum_{1\le i<j\le m}Y_iY_j.
\]
Taking expectations gives
\[
\mathbb{E}[S_m^2]
=
\sum_{i=1}^m \mathbb{E}[Y_i^2]
+
2\sum_{1\le i<j\le m}\mathbb{E}[Y_iY_j].
\]
Since \(\mathbb{E}[Y_i]=0\) and \(\mathrm{Var}(Y_i)=\sigma^2\), we have
\[
\mathbb{E}[Y_i^2]=\sigma^2.
\]
Also, by assumption, \(\mathbb{E}[Y_iY_j]=0\) for all \(i\neq j\). Therefore,
\[
\mathrm{Var}(S_m)
=
\mathbb{E}[S_m^2]
=
m\sigma^2.
\]
By Chebyshev's inequality, for any $\varepsilon > 0$,
\[
\mathbb{P}(|S_m| \ge k \sqrt{m}\,\sigma) \le \frac{1}{k^2}.
\]
Taking $k = 1/\sqrt{\varepsilon}$, we obtain
\[
\mathbb{P}\bigl(|S_m| \le C_\varepsilon \sqrt{m}\,\sigma\bigr) \ge 1 - \varepsilon,
\]
where $C_\varepsilon = 1/\sqrt{\varepsilon}$. This implies that
\[
S_m = \mathcal{O}_p(\sqrt{m}\,\sigma).
\]
\end{proof}

\section{Effect of the NN architectures on the predicted accuracy}\label{sec:nn_architec}

We employ the following  case to test the effect of the architectures of RepNN, i.e., width/depth and activations,  on the predicted accuracy:
\begin{equation}\label{Methodology: higher-frequency}
u(x) = \sin (30 \pi x), \quad x\in[-1,1].
\end{equation}
We employ the reparameterized shallow ReLU NN Eq.~\eqref{eq:shallow-dnn-reparameterization} and its deep counterpart Eq.~\eqref{eq:deep-dnn-reparameterization} to fit the target function.
For the shallow NN, we keep all hyperparameter settings the same as those used for approximating function Eq.~\eqref{Methodology: high-frequency}, except that the number of hidden neurons is set to $m=400$ and $m=20000$, respectively. For the deep reparameterized ReLU DNN, we use 4 hidden layers with 100 neurons per layer. The parameters $(W_{1}, \tilde b_{1})$ are initialized in the same way as in Eq.~\eqref{Methodology: wswmb1}, while the remaining parameters follow He initialization. The minibatch training strategy is employed, and the batch size used in the loss function Eq.~\eqref{eq:dnn-mse-loss} is set to \(N=128\). Based on the estimates in Eq.~\eqref{magnitude2 relu:u}, in an always-activated local region, the reparameterized ReLU DNN yields
\(u_{\bm{\theta}}(x) = \mathcal{O}_p\!\left(\nu _s H^{-1}\right)\) and 
\(\frac{\mathrm{d} u_{\bm{\theta}}}{\mathrm{d} x} = \mathcal{O}_p(\nu_s)\).

\begin{figure}[!ht]
 \centering
 \includegraphics[width=1.0\textwidth]{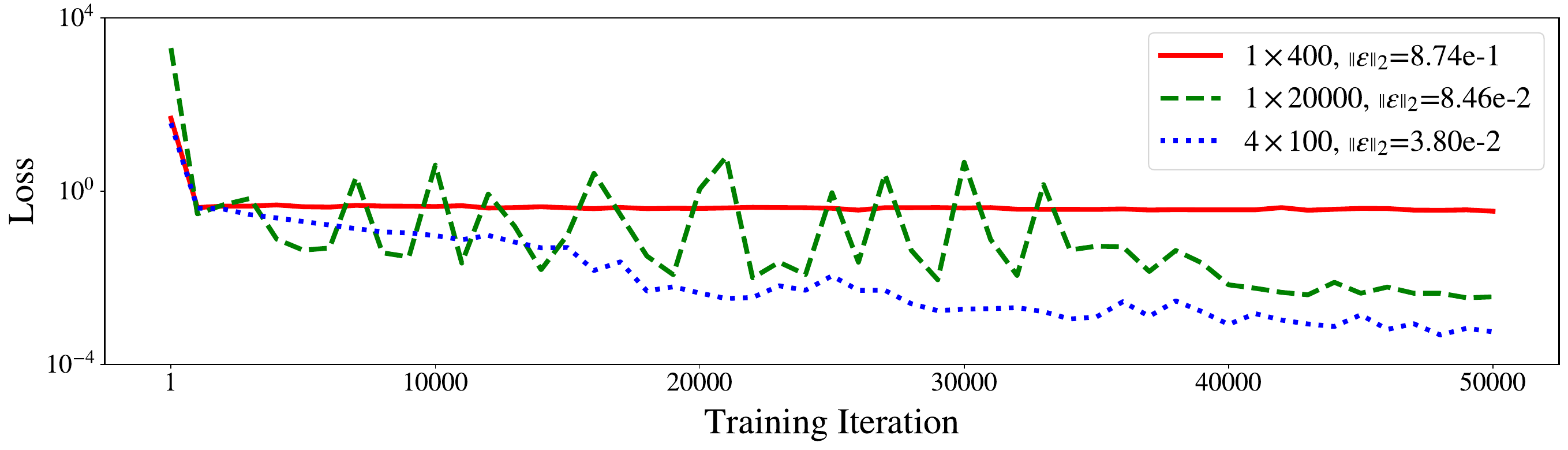} 
 \caption{1D high-frequency function Eq.~\eqref{Methodology: higher-frequency}: Training loss curves and relative $L_2$ errors $\|\epsilon\|_2$ of the reparameterized ReLU neural networks with different network sizes. The shallow networks Eq.~\eqref{eq:shallow-dnn-reparameterization} use $m=400$ and $m=20000$ neurons (denoted by $1 \times 400$ and $1 \times 20000$), while the deep neural network Eq.~\eqref{eq:deep-dnn-reparameterization} consists of 4 hidden layers with 100 neurons per layer (denoted by $4 \times 100$). The definition of the $\|\epsilon\|_2$ is given in Eq.~\eqref{Numerical_examples_L2norm}.}
 \label{fig Methodology: relu loss}
\end{figure}

Fig.~\ref{fig Methodology: relu loss} shows the evolution of the training loss for reparameterized ReLU NNs with different network sizes during optimization. The results show that widening the neural network significantly improves its performance, while increasing the network depth is even more effective. 

\begin{figure}[!ht]
 \centering
 \includegraphics[width=1.0\textwidth]{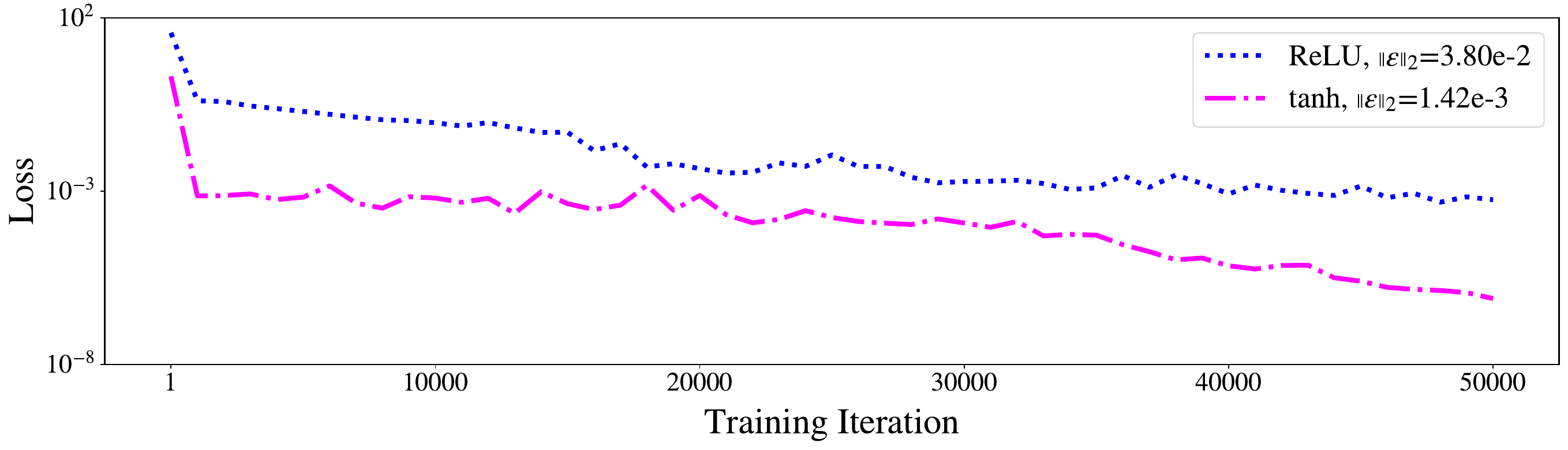} 
 \caption{1D high-frequency function Eq.~\eqref{Methodology: higher-frequency}: Training loss curves and relative $L_2$ errors $\|\epsilon\|_2$ of the reparameterized DNNs with ReLU and $\tanh$ activation functions.}
 \label{fig Methodology: relu and tanh loss}
\end{figure}

We further compare the RepNN with $\relu$ and $\tanh$  activations using the same test case. In particular, both RepNNs consist of 4 hidden layers with 100 neurons per layer.  As observed in Fig.~\ref{fig Methodology: relu and tanh loss}, the RepNN with the \(\tanh\) activation function exhibits faster convergence and achieves improved approximation performance compared to RepNN with $\relu$. In addition, further exploration of RepNN with more different activation functions is provided in \ref{Appendix: different activation function}.

\begin{figure}[!ht]
    \centering
    \includegraphics[width=1.0\textwidth]{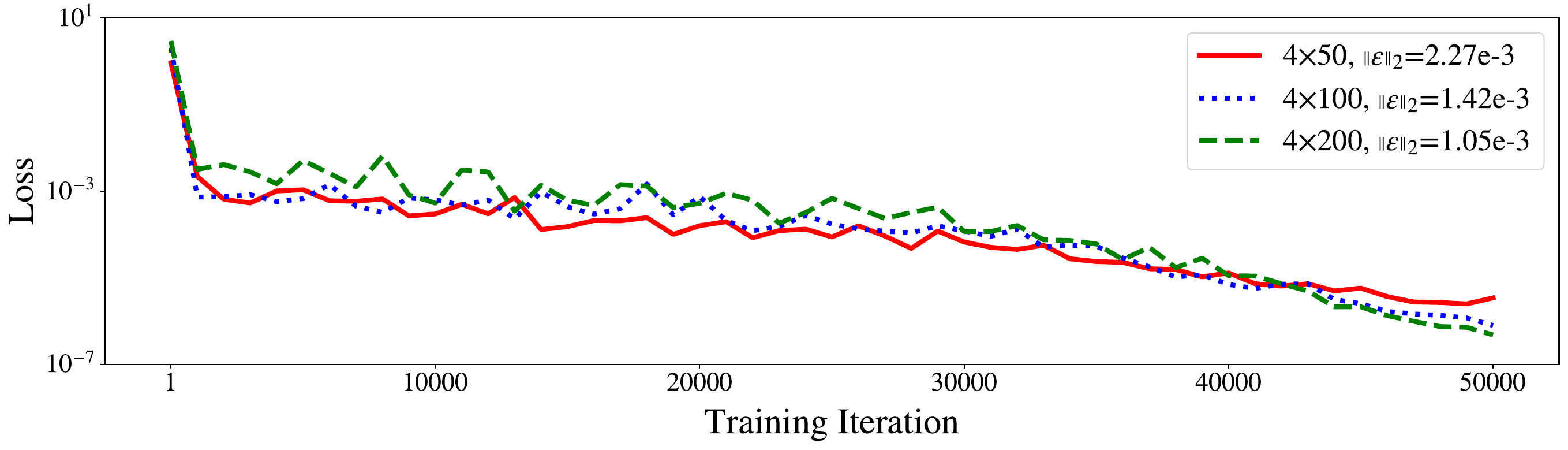} 
    \caption{1D high-frequency function Eq.~\eqref{Methodology: higher-frequency}: Training loss curves and relative $L_2$ errors of the reparameterized DNN Eq.~\eqref{eq:general-deep-dnn-reparameterization} with \(\tanh\) for different network sizes.}

    \label{fig Methodology: network size loss and error}
\end{figure}

Next,  we examine the impact of network size on the performance of the RepNN Eq.~\eqref{eq:general-deep-dnn-reparameterization} with the \(\tanh\) activation function.
We consider three network sizes, each consisting of four hidden layers with 50, 100, or 200 neurons per layer, denoted by \(4 \times 50\), \(4 \times 100\), and \(4 \times 200\), respectively. The three networks are used to approximate the high-frequency function Eq.~\eqref{Methodology: higher-frequency}. The parameters $(W_{1}, \tilde b_{1})$ are initialized in the same manner as in  Eq.~\eqref{Methodology: wswmb1}, while the remaining weights are sampled from Gaussian distributions and the biases are initialized to \(0\):
\begin{equation}
 W_{ij}^{(l)} \sim \mathcal{N}(0,1), \quad  
 W_{ij}^{(K)} \sim \mathcal{N}(0,0.01), \quad 
 b_i^{(l)} = 0, 
 \quad l = 2, \cdots, K-1.
\end{equation}
The other experimental settings are the same as those in the experiments described above.
Based on the analysis in Section \ref{Section: redeepNN}, this initialization yields \(u_{\bm{\theta}}(x) = \mathcal{O}_p(1)\),  
\(\frac{\mathrm{d} u_{\bm{\theta}}}{\mathrm{d} x}|_{\mathrm{lin}} = \mathcal{O}_p(\nu_s H^{\frac{3}{2}})\) and \(\frac{\mathrm{d} u_{\bm{\theta}}}{\mathrm{d} x}|_{\mathrm{sat}} = \mathcal{O}_p(\nu _s H^{-\frac{1}{2}})\).

Fig.~\ref{fig Methodology: network size loss and error} shows the training loss curves of the reparameterized DNN with the \(\tanh\) activation function for different network sizes during optimization. As the number of neurons in each hidden layer increases, the training loss decreases and the prediction accuracy improves.



\section{Physics-informed neural networks}\label{Appendix: pinn}

Physics-informed neural networks (PINNs) constitute a popular class of DNN-based methods for solving partial differential equations.

Consider a PDE problem in the following general form:
\begin{equation}\label{Appendix: pde function}
\begin{cases}
\mathcal{N}\bigl(u(\bm{x});\lambda\bigr)=f(\bm{x}), 
\quad \bm{x}\in \Omega \subset \mathbb{R}^{d}, \\
\mathcal{B}\bigl(u(\bm{x})\bigr)=g(\bm{x}), 
\quad \bm{x}\in \partial\Omega.
\end{cases}
\end{equation}
where $\mathcal{N}$ and $\mathcal{B}$ denote the differential operator and the boundary condition operator, respectively. Here, $u(\bm{x})$ is the solution evaluated at the location $\bm{x}$, $\lambda$ represents the model parameters, and $f(\bm{x})$ and $g(\bm{x})$ are prescribed functions.

A vanilla PINN uses a DNN $u_{\bm{\theta}}(\bm{x})$ to approximate the solution of Eq.~\eqref{Appendix: pde function}, where $\bm{\theta}$ denotes the trainable parameters. The associated MSE loss is given by
\begin{equation}\label{Appendix: loss_std}
\begin{cases}
\displaystyle
\mathcal{L}(\bm{\theta})
=
\mathcal{L}_{\mathrm{PDE}}(\bm{\theta})
+
\mathcal{L}_{\mathrm{BC}}(\bm{\theta})
+
\mathcal{L}_{\mathrm{Data}}(\bm{\theta}),\\[0.6em]
\displaystyle
\mathcal{L}_{\mathrm{PDE}}(\bm{\theta})
=
\frac{1}{N_r}
\sum_{i=1}^{N_r}
\bigl|
\mathcal{N}\bigl(u_{\bm{\theta}}(\bm{x}_i^r);\lambda\bigr)
-
f(\bm{x}_i^r)
\bigr|^2,\\[0.6em]
\displaystyle
\mathcal{L}_{\mathrm{BC}}(\bm{\theta})
=
\frac{1}{N_b}
\sum_{i=1}^{N_b}
\bigl|
\mathcal{B}\bigl(u_{\bm{\theta}}(\bm{x}_i^b)\bigr)
-
g(\bm{x}_i^b)
\bigr|^2,\\[0.6em]
\displaystyle
\mathcal{L}_{\mathrm{Data}}(\bm{\theta})
=
\frac{1}{N_d}
\sum_{i=1}^{N_d}
\bigl|
u_{\bm{\theta}}(\bm{x}_i^d)
-
u(\bm{x}_i^d)
\bigr|^2.
\end{cases}
\end{equation}
Here, the sets $\{\bm{x}_i^r\}_{i=1}^{N_r}$, $\{\bm{x}_i^b\}_{i=1}^{N_b}$, and $\{\bm{x}_i^d\}_{i=1}^{N_d}$ denote the sampling points used to evaluate the PDE residual loss $\mathcal{L}_{\mathrm{PDE}}(\bm{\theta})$, the boundary condition loss $\mathcal{L}_{\mathrm{BC}}(\bm{\theta})$, and the data mismatch loss $\mathcal{L}_{\mathrm{Data}}(\bm{\theta})$, respectively.  {For time-dependent problems, the temporal coordinate $t$ can be regarded as a component of $\bm{x}$, and the corresponding initial conditions can be implemented in the same way for boundary conditions.}

Within the PINN framework, large differences in magnitude among the loss terms may adversely affect the training process. To alleviate this issue, we adopt the regularized loss formulation proposed in Ref.~\cite{WANG2024113112}. Table~\ref{Appendix: Table loss} summarizes the corresponding regularized loss functions and batch sizes used for the PDE problems considered in Section~\ref{section: PDE problem}.

\begin{table}[h]	
\setlength{\abovecaptionskip}{0cm}
\setlength{\belowcaptionskip}{0.2cm}
\caption{Configuration of the regularized loss functions used for the PDE problems considered in Section~\ref{section: PDE problem}. The same batch size is used for all loss terms within each problem. For the Klein--Gordon equation in Section~\ref{section: Klein-Gordon equation}, the initial condition $u(x,0)$ in Eq.~\eqref{PDE: KG} is treated as a special boundary condition, whereas $\mathcal{L}_{u_t}(\bm{\theta})$ denotes the loss term associated with the initial condition $u_t(x,0)$ in Eq.~\eqref{PDE: KG}.}
\label{Appendix: Table loss}
\centering
\begin{adjustbox}{max width=\textwidth}
\begin{tabular}{c|c|c}
\hline 
Section & Loss Function $\mathcal{L}(\bm{\theta})$ 
   & Batch Size \\
\hline 
Section~\ref{section: Klein-Gordon equation} &
$
\bigl(\mathcal{L}_{u_t}(\bm{\theta})\bigr)^{\frac{1}{2}}
+
\bigl(\mathcal{L}_{\mathrm{PDE}}(\bm{\theta})\bigr)^{\frac{1}{3}}
+
\bigl(\mathcal{L}_{\mathrm{BC}}(\bm{\theta})\bigr)^{\frac{1}{3}}
$
& 4096 \\
\hline 
Section~\ref{section: Helmholtz equation} 
&
$
\bigl(\mathcal{L}_{\mathrm{PDE}}(\bm{\theta})\bigr)^{\frac{1}{3}}
+
\bigl(\mathcal{L}_{\mathrm{BC}}(\bm{\theta})\bigr)^{\frac{1}{3}}
$ 
& 5120 \\
\hline 
Section~\ref{section: Gray--Scott equation} &  
$
\bigl(\mathcal{L}_{\mathrm{PDE},u}(\bm{\theta})\bigr)^{\frac{1}{4}}
+
\bigl(\mathcal{L}_{\mathrm{PDE},v}(\bm{\theta})\bigr)^{\frac{1}{4}}
+
\bigl(\mathcal{L}_{\mathrm{Data},u}(\bm{\theta})\bigr)^{\frac{1}{4}}
+
\bigl(\mathcal{L}_{\mathrm{Data},v}(\bm{\theta})\bigr)^{\frac{1}{4}}
$
& 10000 \\
\hline
\end{tabular}
\end{adjustbox}
\end{table}

\section{Deep operator network}\label{Appendix: deepOnet}

The deep operator network (DeepONet) is a neural operator framework designed to learn nonlinear mappings between infinite-dimensional Banach spaces.

Let $\mathcal{S}$ and $\mathcal{V}$ denote the input and output function spaces, respectively. The target operator is defined as
\begin{equation}
 \mathcal{G}: \mathcal{S} \to \mathcal{V}, 
 \qquad v = \mathcal{G}(s),
\end{equation}
where $s \in \mathcal{S}$ is an input function and $v \in \mathcal{V}$ is the corresponding output function. For a query location $\bm{y}$, the value of the output function is given by
\begin{equation}
 v(\bm{y}) = \mathcal{G}(s)(\bm{y}).
\end{equation}

An unstacked DeepONet, denoted by $\mathcal{G}_{\bm{\theta}}$, is trained to approximate the target operator $\mathcal{G}$. The DeepONet consists of two subnetworks with the same output dimension: a branch network $\mathcal{N}_{b}$ and a trunk network $\mathcal{N}_{t}$. The branch network encodes the discretized input function
$
\bigl(s(\bm{x}_{1}), s(\bm{x}_{2}), \ldots, s(\bm{x}_{m})\bigr)
$
into a set of coefficients, while the trunk network encodes the query location $\bm{y}$ into a set of coordinate-dependent basis functions. Thus, the DeepONet $\mathcal{G}_{\bm{\theta}}$ evaluated at a point $\bm{y}$ can be expressed as
\begin{equation}
 \mathcal{G}_{\bm{\theta}}(s)(\bm{y})
=
\underbrace{
\mathcal{N}_{b}\bigl(
s(\bm{x}_{1}), s(\bm{x}_{2}), \ldots, s(\bm{x}_{m})
\bigr)^{T}
}_{\textit{branch net}}
\underbrace{
\mathcal{N}_{t}(\bm{y})
}_{\textit{trunk net}}.
\end{equation}
Here, $\bm{\theta}$ denotes all trainable parameters of the two subnetworks. Once trained, for a given input function $s \in \mathcal{S}$, the DeepONet $\mathcal{G}_{\bm{\theta}}$ can efficiently evaluate the corresponding output function $v \in \mathcal{V}$ at arbitrary query locations.

For the earthquake problem in Section~\ref{section: Earthquake problem}, let $\bm{x}^m_{\bm{\theta}}(t)$ denote the DeepONet prediction corresponding to the $m$-th ground acceleration input $\ddot{u}^m_g(t)$, where ${\bm{\theta}}$ represents the trainable parameters of both the branch and trunk networks. The loss function is defined as
\begin{equation}\label{Earthquake: loss}
\mathcal{L}({\bm{\theta}}) = \frac{1}{M} \sum_{m=1}^M \bigl\| \bm{x}^m_{\bm{\theta}}(t) - \bm{x}^m(t) \bigr\|^2,
\end{equation}
where $\bm{x}^m(t)$ is the corresponding ground-truth displacement for the $m$-th input. Here, \(M=2\) is the batch size, and \(\|\cdot\|^2\) denotes the MSE loss for one sample.

\section{Gradient dynamics and the role of tanh saturation}\label{Section: gradient dynamics}

Consider the backpropagation chain for the gradient with respect to $W_1$. For a scalar output and MSE loss, we have:
\begin{equation}\label{eq:backprop}
\frac{\partial \mathcal{L}}{\partial W_1} = 2(u_{\bm{\theta}} - u) \cdot W_K \cdot \mathbf{J}_{K-1} \cdot W_{K-1} \cdot \mathbf{J}_{K-2} \cdots W_2 \cdot \mathbf{J}_1 \cdot {\frac{\partial z_1}{\partial W_1}},
\end{equation}
where $\mathbf{J}_l = \operatorname{diag}\bigl(\tanh'(z_l[1]), \ldots, \tanh'(z_l[H])\bigr)$ is the Jacobian of the activation function at layer $l$.

At initialization, the first-layer pre-activations satisfy $z_1 \sim \mathcal{N}(0, \nu_s^2)$ with $\nu_s=10$. For typical inputs $|x| \lesssim 1$, most neurons are deep in saturation: $|z_1| \gg 2$, yielding $\tanh'(z_1) \approx 0$. The saturated outputs $\tanh(z_1) \approx \pm 1$ then drive subsequent layers: each pre-activation $z_l[j] \approx \sum_i \pm W_l[j,i]$ is approximately $\mathcal{N}(0, H)$ since $W_l[j,i] \sim \mathcal{N}(0,1)$. With $H=300$, we obtain $z_l \approx \mathcal{O}(H^{\frac{1}{2}}) \approx 17$, for which $\tanh'(17) \approx 4\times 10^{-15}$.

Thus, at initialization, each $\mathbf{J}_l$ for $l \ge 2$ is near-zero. A naive estimate would suggest gradient attenuation by $\sim(10^{-15})^{K-2}$, far below float32 underflow ($\sim 10^{-38}$). Yet training succeeds in practice. The resolution lies in a small fraction of \textbf{active pathways}: approximately $9\%$ of neurons per layer satisfy $|z| < 2$, for which $\tanh'(z) \in [0.07, 1.0]$. These active neurons form sparse channels that carry gradient through the depth. With $H=300$, the number of fully-active pathways through $K-1=4$ hidden layers is roughly $(0.09)^4 \times H^4 \approx 5 \times 10^5$, which is sufficient for reliable gradient propagation.

Two additional factors ensure stable training. First, the Adam optimizer maintains per-parameter adaptive learning rates via the second-moment estimate, partially compensating for small gradient magnitudes on saturated parameters. Second, as training proceeds, the weights and biases adapt to bring more neurons into the active regime in regions where the target function requires high-frequency resolution. This dynamic is the mechanism behind the adaptive frequency scaling property of RepNN.

The dual slope estimates $\mathcal{O}_p(\nu_s H^{\frac{3}{2}})$ (linear) and {$\mathcal{O}_p(\nu_s H^{-\frac{1}{2}})$}  (saturation) from Section~\ref{Section: redeepNN} capture this dynamic precisely. The linear estimate represents the local learning regime --- the maximum slopes achievable near the origin, which enable high-frequency fitting. The saturation estimate represents the global regime at initialization, where $\tanh$ saturation constrains gradient magnitudes and prevents explosion. The $\tanh$ activation thus serves as a built-in regularizer: it permits large slopes locally for expressivity while globally bounding the gradient scale.


\section{RepNN with additional activation functions}\label{Appendix: different activation function}

To further investigate the impact of activation functions on the RepNN, we compare the vanilla DNN Eq.~\eqref{DNNs} and the RepNN Eq.~\eqref{eq:general-deep-dnn-reparameterization} with various activation functions for approximating the high-frequency function Eq.~\eqref{Methodology: higher-frequency}.
For the vanilla DNNs, the trainable parameters are initialized using Xavier initialization Eq.~\eqref{DNN:Xavier_initialition}.  Both the vanilla DNN and the RepNN consist of 4 hidden layers with 100 neurons per layer. The remaining experimental settings are the same as those in \ref{sec:nn_architec}.


\begin{table}[h]	
\setlength{\abovecaptionskip}{0cm}
		\setlength{\belowcaptionskip}{0.2cm}
\caption{1D high-frequency function Eq.~\eqref{Methodology: higher-frequency}: The performance of the vanilla DNN Eq.~\eqref{DNNs} and the RepNN Eq.~\eqref{eq:general-deep-dnn-reparameterization} with different activation functions.}
\label{appendix: table different activation functions}
\centering
\begin{adjustbox}{max width=\textwidth}
\begin{tabular}{c|c|c|c|c}
\hline 
\multirow{2}{*}{\makecell{Activation function}}  & \multicolumn{2}{c|}{Vanilla DNN} & \multicolumn{2}{c}{RepNN}\\
\cline{2-5}
& $\mathcal{L}({\bm{\theta}})$ &  $\left\|\epsilon\right\|_2$  & $\mathcal{L}({\bm{\theta}})$ &  $\left\|\epsilon\right\|_2$  \\
\hline
tanh & $4.82 \times 10^{-1}$ & $1.00 \times 10^{0}$ 
     & $7.94 \times 10^{-7}$ & $1.42 \times 10^{-3}$ \\
\hline
Sigmoid  & $4.47 \times 10^{-1}$ & $9.62 \times 10^{-1}$ 
         & $1.75 \times 10^{-5}$ & $5.84 \times 10^{-3}$  \\                      
\hline
Softsign  & $1.71 \times 10^{-1}$ & $5.75 \times 10^{-1}$ 
          & $3.35 \times 10^{-6}$ & $2.76 \times 10^{-3}$ \\
\hline
\end{tabular}
\end{adjustbox}
\end{table}

We present in Table~\ref{appendix: table different activation functions} the results obtained by the vanilla DNN and the RepNN with different activation functions. As observed, for all considered activation functions, the RepNN significantly improves the prediction accuracy compared to the vanilla DNN. Specifically, the RepNN reduces the training loss by approximately 4--5 orders of magnitude and achieves relative \(L_2\) errors 2--3 orders smaller than those of the corresponding vanilla DNNs. Among the three activation functions, the RepNN with the \(\tanh\)  achieves the smallest training loss and relative \(L_2\) error. These results indicate that  the \(\tanh\) activation function enables the RepNN to capture high-frequency features more effectively.

\clearpage
\bibliography{mybibfile}

@article{WANG2021113938,
title = {On the eigenvector bias of Fourier feature networks: From regression to solving multi-scale PDEs with physics-informed neural networks},
journal = {Computer Methods in Applied Mechanics and Engineering},
volume = {384},
pages = {113938},
year = {2021},
issn = {0045-7825},
doi = {10.1016/j.cma.2021.113938},
author = {Sifan Wang and Hanwen Wang and Paris Perdikaris}
}

@article{khodakarami2026spectral,
  title={Spectral bias in physics-informed and operator learning: Analysis and mitigation guidelines},
  author={Khodakarami, Siavash and Oommen, Vivek and Daryakenari, Nazanin Ahmadi and Beekenkamp, Maxim and Karniadakis, George Em},
  journal={arXiv:2602.19265},
  year={2026}
}

@misc{kingma2,
      title={Adam: A Method for Stochastic Optimization}, 
      author={Diederik P. Kingma and Jimmy Ba},
      year={2017},
      eprint={1412.6980},
      archivePrefix={arXiv},
      primaryClass={cs.LG},
}

@article{RAISSI1,
title = {Physics-informed neural networks: A deep learning framework for solving forward and inverse problems involving nonlinear partial differential equations},
journal = {Journal of Computational Physics},
volume = {378},
pages = {686-707},
year = {2019},
issn = {0021-9991},
doi = {10.1016/j.jcp.2018.10.045},
author = {M. Raissi and P. Perdikaris and G.E. Karniadakis}
}

@article{lu2021learning,
  title={Learning nonlinear operators via DeepONet based on the universal approximation theorem of operators},
  author={Lu, Lu and Jin, Pengzhan and Pang, Guofei and et al.},
  journal={Nature machine intelligence},
  volume={3},
  number={3},
  pages={218--229},
  year={2021},
  doi = {10.1038/s42256-021-00302-5},
  publisher={Nature Publishing Group UK London}
}

@article{AMIN2026118645,
title = {I-FENN with DeepONets: Accelerating simulations in coupled multiphysics problems},
journal = {Computer Methods in Applied Mechanics and Engineering},
volume = {451},
pages = {118645},
year = {2026},
issn = {0045-7825},
doi = {10.1016/j.cma.2025.118645},
author = {Fouad M. Amin and Diab W. Abueidda and Panos Pantidis and Mostafa E. Mobasher}
}

@article{101093gjiggac399,
    author = {Song, Chao and Wang, Yanghua},
    title = {Simulating seismic multifrequency wavefields with the Fourier feature physics-informed neural network},
    journal = {Geophysical Journal International},
    volume = {232},
    number = {3},
    pages = {1503-1514},
    year = {2022},
    month = {10},
    issn = {0956-540X},
    doi = {10.1093/gji/ggac399},
}

@article{GENG2025105967,
title = {Seismic first-arrival traveltime simulation based on reciprocity-constrained PINN},
journal = {Journal of Applied Geophysics},
volume = {243},
pages = {105967},
year = {2025},
issn = {0926-9851},
doi = {10.1016/j.jappgeo.2025.105967},
author = {Hang Geng and Chao Song and Umair {bin Waheed} and Cai Liu}
}

@article{zanardi2023adaptive,
  title={Adaptive physics-informed neural operator for coarse-grained non-equilibrium flows},
  author={Zanardi, Ivan and Venturi, Simone and Panesi, Marco},
  journal={Scientific Reports},
  volume={13},
  number={1},
  pages={15497},
  year={2023},
  publisher={Nature Publishing Group UK London}
}

@article{ZHANG2024113647,
title = {CRK-PINN: A physics-informed neural network for solving combustion reaction kinetics ordinary differential equations},
journal = {Combustion and Flame},
volume = {269},
pages = {113647},
year = {2024},
issn = {0010-2180},
doi = {10.1016/j.combustflame.2024.113647},
author = {Shihong Zhang and Chi Zhang and Bosen Wang}
}

@InProceedings{pmlrv97rahaman19a,
  title = 	 {On the Spectral Bias of Neural Networks},
  author =       {Rahaman, Nasim and Baratin, Aristide and Arpit, Devansh and et al.},
  booktitle = 	 {Proceedings of the 36th International Conference on Machine Learning},
  pages = 	 {5301--5310},
  year = 	 {2019},
  editor = 	 {Chaudhuri, Kamalika and Salakhutdinov, Ruslan},
  volume = 	 {97},
  series = 	 {Proceedings of Machine Learning Research},
  month = 	 {09--15 Jun},
  publisher =    {PMLR}
}

@Article{CiCP281970,
author = {Ziqi Liu and Wei Cai and  Zhiqin John Xu},
title = {Multi-Scale Deep Neural Network (MscaleDNN) for Solving Poisson-Boltzmann Equation in Complex Domains},
journal = {Communications in Computational Physics},
year = {2020},
volume = {28},
number = {5},
pages = {1970--2001},
issn = {1991-7120},
doi = {10.4208/cicp.OA-2020-0179},
}

@article{XU2025113905,
title = {On understanding and overcoming spectral biases of deep neural network learning methods for solving PDEs},
journal = {Journal of Computational Physics},
volume = {530},
pages = {113905},
year = {2025},
issn = {0021-9991},
doi = {10.1016/j.jcp.2025.113905},
author = {Zhi-Qin John Xu and Lulu Zhang and Wei Cai}
}

@article{doi10113719M1310050,
author = {Cai, Wei and Li, Xiaoguang and Liu, Lizuo},
title = {A Phase Shift Deep Neural Network for High Frequency Approximation and Wave Problems},
journal = {SIAM Journal on Scientific Computing},
volume = {42},
number = {5},
pages = {A3285-A3312},
year = {2020},
doi = {10.1137/19M1310050}
}

@inproceedings{hu2024neutron,
  title={Neutron resonance cross sections evaluation based on the phase shift deep neural network},
  author={Hu, Zehua},
  booktitle={EPJ Web of Conferences},
  volume={302},
  pages={07002},
  year={2024},
  organization={EDP Sciences}
}

@inproceedings{10555536258343625935,
  title = 	 {Phase-shifted adversarial training},
  author =       {Kim, Yeachan and Kim, Seongyeon and Seo, Ihyeok and Shin, Bonggun},
  booktitle = 	 {Proceedings of the Thirty-Ninth Conference on Uncertainty in Artificial Intelligence},
  pages = 	 {1068--1077},
  year = 	 {2023},
  editor = 	 {Evans, Robin J. and Shpitser, Ilya},
  volume = 	 {216},
  series = 	 {Proceedings of Machine Learning Research},
  month = 	 {31 Jul--04 Aug},
  publisher =    {PMLR}
}

@article{HUANG2025117751,
title = {Frequency-adaptive multi-scale deep neural networks},
journal = {Computer Methods in Applied Mechanics and Engineering},
volume = {437},
pages = {117751},
year = {2025},
issn = {0045-7825},
doi = {10.1016/j.cma.2025.117751},
author = {Jizu Huang and Rukang You and Tao Zhou}
}

@article{WANG2024113112,
title = {A practical PINN framework for multi-scale problems with multi-magnitude loss terms},
journal = {Journal of Computational Physics},
volume = {510},
pages = {113112},
year = {2024},
issn = {0021-9991},
doi = {10.1016/j.jcp.2024.113112},
author = {Yong Wang and Yanzhong Yao and Jiawei Guo and Zhiming Gao}
}

@article{LI2023112242,
title = {Subspace decomposition based DNN algorithm for elliptic type multi-scale PDEs},
journal = {Journal of Computational Physics},
volume = {488},
pages = {112242},
year = {2023},
issn = {0021-9991},
doi = {10.1016/j.jcp.2023.112242},
author = {Xi-An Li and Zhi-Qin John Xu and Lei Zhang}
}

@article{tancik2020fourier,
  title={Fourier features let networks learn high frequency functions in low dimensional domains},
  author={Tancik, Matthew and Srinivasan, Pratul and Mildenhall, Ben and et al.},
  journal={Advances in Neural Information Processing Systems},
  volume={33},
  pages={7537--7547},
  year={2020}
}

@article{LI2023114963,
title = {A deep domain decomposition method based on Fourier features},
journal = {Journal of Computational and Applied Mathematics},
volume = {423},
pages = {114963},
year = {2023},
issn = {0377-0427},
doi = {10.1016/j.cam.2022.114963},
author = {Sen Li and Yingzhi Xia and Yu Liu and Qifeng Liao}
}

@misc{feng202ention,
      title={Overcoming Spectral Bias via Cross-Attention}, 
      author={Xiaodong Feng and Tao Tang and Xiaoliang Wan and Tao Zhou},
      year={2025},
      eprint={2512.18586},
      archivePrefix={arXiv},
      primaryClass={math.NA}
}

@misc{huasorneuralnetworks,
      title={Frequency-adaptive tensor neural networks for high-dimensional multi-scale problems}, 
      author={Jizu Huang and Rukang You and Tao Zhou},
      year={2025},
      eprint={2508.15198},
      archivePrefix={arXiv},
      primaryClass={cs.LG}
}

@misc{zhelving,
      title={Data-integrated neural networks for solving partial differential equations}, 
      author={Jiachun Zheng and Yunqing Huang and Nianyu Yi and Yunlei Yang},
      year={2026},
      eprint={2511.12055},
      archivePrefix={arXiv},
      primaryClass={math.NA},
}

@misc{zhltz,
      title={The Adaptive Solution of High-Frequency Helmholtz Equations via Multi-Grade Deep Learning}, 
      author={Peiyao Zhao and Rui Wang and Tingting Wu and Yuesheng Xu},
      year={2026},
      eprint={2602.20719},
      archivePrefix={arXiv},
      primaryClass={math.NA}
}

@article{10106350286561,
    author = {Xiong, Xiong and Lu, Kang and Zhang, Zhuo and et al.},
    title = {High-frequency flow field super-resolution via physics-informed hierarchical adaptive Fourier feature networks},
    journal = {Physics of Fluids},
    volume = {37},
    number = {9},
    pages = {097111},
    year = {2025},
    month = {09},
    issn = {1070-6631},
    doi = {10.1063/5.0286561},
}

@article{wang202ing,
  title={Solving high-dimensional partial differential equations using tensor neural network and a posteriori error estimators},
  author={Wang, Yifan and Lin, Zhongshuo and Liao, Yangfei and et al.},
  journal={Journal of Scientific Computing},
  volume={101},
  number={3},
  pages={67},
  year={2024},
  publisher={Springer}
}

@article{doi1043,
author = {Wang, Sifan and Teng, Yujun and Perdikaris, Paris},
title = {Understanding and Mitigating Gradient Flow Pathologies in Physics-Informed Neural Networks},
journal = {SIAM Journal on Scientific Computing},
volume = {43},
number = {5},
pages = {A3055-A3081},
year = {2021},
doi = {10.1137/20M1318043}
}

@article{MA202641,
title = {Regularized numerical method for the space-fractional logarithmic Klein-Gordon equation},
journal = {Computers $\&$ Mathematics with Applications},
volume = {203},
pages = {41-55},
year = {2026},
issn = {0898-1221},
doi = {10.1016/j.camwa.2025.11.025},
author = {Qibo Ma and Xiaoyun Jiang and Junqing Jia}
}

@misc{ren2025rmed,
      title={General Fourier Feature Physics-Informed Extreme Learning Machine (GFF-PIELM) for High-Frequency PDEs}, 
      author={Fei Ren and Sifan Wang and Pei-Zhi Zhuang and et al.},
      year={2025},
      eprint={2510.12293},
      archivePrefix={arXiv},
      primaryClass={cs.LG},
}

@misc{driscoll2014chebfun,
  title={Chebfun guide},
  author={Driscoll, Tobin A and Hale, Nicholas and Trefethen, Lloyd N},
  year={2014},
  publisher={Pafnuty Publications, Oxford}
}

@article{liCausalityear,
author = {Liu, Lizuo and Nath, Kamaljyoti and Cai, Wei},
year = {2024},
month = {05},
pages = {1194-1228},
title = {A Causality-DeepONet for Causal Responses of Linear Dynamical Systems},
volume = {35},
journal = {Communications in Computational Physics},
doi = {10.4208/cicp.OA-2023-0078}
}

@inproceedings{glorot,
  title={Understanding the difficulty of training deep feedforward neural networks},
  author={Glorot, Xavier and Bengio, Yoshua},
  booktitle={Proceedings of the thirteenth international conference on artificial intelligence and statistics},
  pages={249--256},
  year={2010},
  organization={JMLR Workshop and Conference Proceedings}
}

@inproceedings{he2015delving,
  title={Delving deep into rectifiers: Surpassing human-level performance on imagenet classification},
  author={He, Kaiming and Zhang, Xiangyu and Ren, Shaoqing and Sun, Jian},
  booktitle={Proceedings of the IEEE international conference on computer vision},
  pages={1026--1034},
  year={2015}
}

@article{LLLBFGS,
author = {Byrd, Richard H. and Lu, Peihuang and Nocedal, Jorge and Zhu, Ciyou},
title = {A limited memory algorithm for bound constrained optimization},
journal = {SIAM Journal on Scientific Computing},
volume = {16},
number = {5},
pages = {1190-1208},
year = {1995},
doi = {10.1137/0916069}
}

@article{xu2025overview,
  title={Overview frequency principle/spectral bias in deep learning},
  author={Xu, Zhi-Qin John and Zhang, Yaoyu and Luo, Tao},
  journal={Communications on Applied Mathematics and Computation},
  volume={7},
  number={3},
  pages={827--864},
  year={2025},
  publisher={Springer}
}

@article{hong2022activation,
  title={On the activation function dependence of the spectral bias of neural networks},
  author={Hong, Qingguo and Siegel, Jonathan W and Tan, Qinyang and Xu, Jinchao},
  journal={arXiv:2208.04924},
  year={2022}
}

@article{he2020reluReLU,
  title={ReLU deep neural networks and linear finite elements},
  author={He, Juncai and Li, Lin and Xu, Jinchao and Zheng, Chunyue},
  journal={Journal of Computational Mathematics},
  volume={38},
  number={3},
  pages={502--527},
  year={2020},
  publisher={JSTOR}
}

@article{CHEN2025113939,
title = {A level set immersed finite element method for parabolic problems on surfaces with moving interfaces},
journal = {Journal of Computational Physics},
volume = {531},
pages = {113939},
year = {2025},
issn = {0021-9991},
doi = {10.1016/j.jcp.2025.113939},
author = {Jiaqi Chen and Xufeng Xiao and Xinlong Feng and Dongwoo Sheen}
}

@article{doi1011370718033,
author = {Babuska, I. and Szabo, B. A. and Katz, I. N.},
title = {The p-Version of the Finite Element Method},
journal = {SIAM Journal on Numerical Analysis},
volume = {18},
number = {3},
pages = {515-545},
year = {1981},
doi = {10.1137/0718033},
}

@article{TensorNN,
author = {Yifan Wang and Hehu Xie},
title = {Tensor neural network and its numerical integration},
journal={Journal of Computational Mathematics},
doi={10.4208/jcm.2307-m2022-0233},
year={2024},
pages={1714–1742},
volume={42},
month={Nov.}}

@article{cheng2026generalized,
  title={Generalized Transferable Neural Networks for Steady-State Partial Differential Equations},
  author={Cheng, Tao and Ju, Lili and Qiao, Zhonghua and Zhang, Xiaoping},
  journal={arXiv preprint arXiv:2604.03020},
  year={2026}
}

@article{Structured_First_Layer,
title={Structured First-Layer Initialization Pre-Training Techniques to Accelerate Training Process Based on $\varepsilon$-Rank},
author={Tao Tang and Jiang Yang and Yuxiang Zhao and Yuxiang Zhao},
journal={Communications in Computational Physics},
DOI={10.4208/cicp.OA-2025-0185},
year={2026},
pages={61–87},
month={Mar.},
number={1},
volume={40}}

@misc{yang2026adtworks,
      title={Adaptive-Distribution Randomized Neural Networks for PDEs: A Low-Dimensional Distribution-Learning Framework}, 
      author={You Yang and Fei Wang},
      year={2026},
      eprint={2604.23999},
      archivePrefix={arXiv},
      primaryClass={math.NA},
}

@article{sitzmann2020implicit,
  title={Implicit neural representations with periodic activation functions},
  author={Sitzmann, Vincent and Martel, Julien and Bergman, Alexander and et al.},
  journal={Advances in Neural Information Processing Systems},
  volume={33},
  pages={7462--7473},
  year={2020}
}

@article{Zhang_Shukla_Wang_s, 
title={Turbulence closure in Reynolds-averaged Navier–Stokes and flow inference around a cylinder using physics-informed neural networks and sparse experimental data}, 
volume={1034}, 
DOI={10.1017/jfm.2026.11471}, 
journal={Journal of Fluid Mechanics}, 
author={Zhang, Zhen and Shukla, Khemraj and Wang, Zhicheng and et al.}, 
year={2026}, 
pages={A16}}

@article{ZHENG2025110329,
title = {CF-DeepONet: Deep operator neural networks for solving compressible flows},
journal = {Aerospace Science and Technology},
volume = {163},
pages = {110329},
year = {2025},
issn = {1270-9638},
doi = {10.1016/j.ast.2025.110329},
author = {Jinglai Zheng and Hanying Hu and Jie Huang and Buyue Zhao and Haiming Huang}
}

@article{CHEN2022110996,
title = {Meta-MgNet: Meta multigrid networks for solving parameterized partial differential equations},
journal = {Journal of Computational Physics},
volume = {455},
pages = {110996},
year = {2022},
issn = {0021-9991},
doi = {10.1016/j.jcp.2022.110996},
author = {Yuyan Chen and Bin Dong and Jinchao Xu}
}

@article{zhang2024blending,
  title={Blending neural operators and relaxation methods in PDE numerical solvers},
  author={Zhang, Enrui and Kahana, Adar and Kopani{\v{c}}{\'a}kov{\'a}, Alena and et al.},
  journal={Nature Machine Intelligence},
  volume={6},
  number={11},
  pages={1303--1313},
  year={2024},
  publisher={Nature Publishing Group UK London}
}

@article{ALDIRANY2024116666,
title = {Multi-level neural networks for accurate solutions of boundary-value problems},
journal = {Computer Methods in Applied Mechanics and Engineering},
volume = {419},
pages = {116666},
year = {2024},
issn = {0045-7825},
doi = {10.1016/j.cma.2023.116666},
author = {Ziad Aldirany and Régis Cottereau and Marc Laforest and Serge Prudhomme}
}

@misc{wangves,
title={Random Weight Factorization Improves the Training of Continuous Neural Representations}, 
      author={Sifan Wang and Hanwen Wang and Jacob H. Seidman and Paris Perdikaris},
      year={2022},
      eprint={2210.01274},
      archivePrefix={arXiv},
      primaryClass={cs.LG},
}

@article{YOU2026114530,
title = {MscaleFNO: Multi-scale Fourier neural operator learning for oscillatory functions and wave scattering problems},
journal = {Journal of Computational Physics},
volume = {547},
pages = {114530},
year = {2026},
issn = {0021-9991},
doi = {10.1016/j.jcp.2025.114530},
author = {Zilin You and Zhenli Xu and Wei Cai}
}

@misc{wang2safsded,
    title={Simulating Three-dimensional Turbulence with Physics-informed Neural Networks}, 
      author={Sifan Wang and Shyam Sankaran and Xiantao Fan and et al.},
      year={2025},
      eprint={2507.08972},
      archivePrefix={arXiv},
      primaryClass={cs.LG},
}

@article{CHEN2025124577,
title = {Three-dimensional spatiotemporal wind field reconstruction based on LiDAR and multi-scale PINN},
journal = {Applied Energy},
volume = {377},
pages = {124577},
year = {2025},
issn = {0306-2619},
doi = {10.1016/j.apenergy.2024.124577},
author = {Yuanqing Chen and Ding Wang and Dachuan Feng and et al.}
}

@article{doi10113723M1558227,
author = {Li, Zhengyi and Wang, Yanli and Liu, Hongsheng and et al.},
title = {Solving the Boltzmann Equation with a Neural Sparse Representation},
journal = {SIAM Journal on Scientific Computing},
volume = {46},
number = {2},
pages = {C186-C215},
year = {2024},
doi = {10.1137/23M1558227}
}

@article{xie2025simultaneous,
title={Simultaneous suppression of seismic random and erratic noise using PINN with high-frequency preservation},
author={Xie, Peihong and Liu, Yang and Liu, Cai and et al.},
journal={Journal of Geophysics and Engineering},
volume={22},
number={6},
pages={1796--1808},
year={2025},
publisher={Oxford University Press}
}

@misc{chsicsinformed,
      title={Enforcing hidden physics in physics-informed neural networks}, 
      author={Nanxi Chen and Sifan Wang and Rujin Ma and et al.},
      year={2025},
      eprint={2511.14348},
      archivePrefix={arXiv},
      primaryClass={cs.LG}
}

@article{HUANG2025113676,
title = {Sparse-regularized high-frequency enhanced neural network for solving high-frequency problems},
journal = {Journal of Computational Physics},
volume = {523},
pages = {113676},
year = {2025},
issn = {0021-9991},
doi = {10.1016/j.jcp.2024.113676},
author = {Qilin Huang and Mingjin Fang and Dongsheng Cheng and et al.}
}

@article{KHADEMI2025109672,
title = {Physics-informed neural networks with trainable sinusoidal activation functions for approximating the solutions of the Navier-Stokes equations},
journal = {Computer Physics Communications},
volume = {314},
pages = {109672},
year = {2025},
issn = {0010-4655},
doi = {10.1016/j.cpc.2025.109672},
author = {Amirhossein Khademi and Steven Dufour}
}

@misc{tang2based,
title={Multiscale lubrication simulation based on fourier feature networks with trainable frequency}, 
author={Yihu Tang and Li Huang and Limin Wu and Xianghui Meng},
year={2024},
eprint={2405.12638},
archivePrefix={arXiv},
primaryClass={cs.LG},
}

@article{LIU2025106886,
title = {Diminishing spectral bias in physics-informed neural networks using spatially-adaptive Fourier feature encoding},
journal = {Neural Networks},
volume = {182},
pages = {106886},
year = {2025},
issn = {0893-6080},
doi = {10.1016/j.neunet.2024.106886},
author = {Yarong Liu and Hong Gu and Xiangjun Yu and Pan Qin}
}

@article{HOU2026108247,
title = {Fourier feature-enhanced multi-layer residual stacking network: A novel multiscale modeling approach for physics-informed neural networks},
journal = {Neural Networks},
volume = {195},
pages = {108247},
year = {2026},
issn = {0893-6080},
doi = {10.1016/j.neunet.2025.108247},
author = {Bo-Ya Hou and Yu-Long Bai and Xia-Ting Jing and Chun-lin Huang}
}

@article{yu2018deep,
  title={The deep Ritz method: a deep learning-based numerical algorithm for solving variational problems},
  author={Yu, Bing and others},
  journal={Communications in Mathematics and Statistics},
  volume={6},
  number={1},
  pages={1--12},
  year={2018},
  publisher={Springer}
}

@misc{li2021ftric,
      title={Fourier Neural Operator for Parametric Partial Differential Equations}, 
      author={Zongyi Li and Nikola Kovachki and Kamyar Azizzadenesheli and et al.},
      year={2021},
      eprint={2010.08895},
      archivePrefix={arXiv},
      primaryClass={cs.LG},
}

@article{LIN2025114364,
title = {Monte Carlo physics-informed neural networks for multiscale heat conduction via phonon Boltzmann transport equation},
journal = {Journal of Computational Physics},
volume = {542},
pages = {114364},
year = {2025},
issn = {0021-9991},
doi = {10.1016/j.jcp.2025.114364},
author = {Qingyi Lin and Chuang Zhang and Xuhui Meng and Zhaoli Guo}
}

@inproceedings{ICLR2025_e9886640,
 author = {Vyas, Nikhil and Morwani, Depen and Zhao, Rosie and et al.},
 booktitle = {International Conference on Learning Representations},
 editor = {Y. Yue and A. Garg and N. Peng and F. Sha and R. Yu},
 pages = {93423--93444},
 title = {SOAP: Improving and Stabilizing Shampoo using Adam for Language Modeling},
 volume = {2025},
 year = {2025}
}

@article{HU2024106369,
title = {Tackling the curse of dimensionality with physics-informed neural networks},
journal = {Neural Networks},
volume = {176},
pages = {106369},
year = {2024},
issn = {0893-6080},
doi = {10.1016/j.neunet.2024.106369},
author = {Zheyuan Hu and Khemraj Shukla and George Em Karniadakis and Kenji Kawaguchi}
}

@misc{luo2019thinciplegeneral,
      title={Theory of the Frequency Principle for General Deep Neural Networks}, 
      author={Tao Luo and Zheng Ma and Zhi-Qin John Xu and Yaoyu Zhang},
      year={2019},
      eprint={1906.09235},
      archivePrefix={arXiv},
      primaryClass={cs.LG},
}

@misc{newman2018stabletensorneuralnetworks,
      title={Stable Tensor Neural Networks for Rapid Deep Learning}, 
      author={Elizabeth Newman and Lior Horesh and Haim Avron and Misha Kilmer},
      year={2018},
      eprint={1811.06569},
      archivePrefix={arXiv},
      primaryClass={cs.LG},
}

@article{novikov2015tensorizing,
  title={Tensorizing neural networks},
  author={Novikov, Alexander and Podoprikhin, Dmitrii and Osokin, Anton and Vetrov, Dmitry P},
  journal={Advances in neural information processing systems},
  volume={28},
  year={2015}
}
\end{document}